\documentclass[12pt]{article} 


\title{Uncertainty quantification for distributed regression}
\usepackage{times}

\RequirePackage[OT1]{fontenc}
\RequirePackage{amsmath}

\usepackage{amsmath}
\usepackage{mathtools}

\mathtoolsset{showonlyrefs}
\usepackage{amssymb}
\usepackage{calligra}
\usepackage{amsfonts}
\usepackage[scr=boondox]{mathalfa}

\usepackage{float}
\usepackage{natbib}
\bibliographystyle{abbrvnat}
\usepackage[strings]{underscore}
\usepackage{hyperref}

\makeatletter
\def\renewtheorem#1{%
	\expandafter\let\csname#1\endcsname\relax
	\expandafter\let\csname c@#1\endcsname\relax
	\gdef\renewtheorem@envname{#1}
	\renewtheorem@secpar
}
\def\renewtheorem@secpar{\@ifnextchar[{\renewtheorem@numberedlike}{\renewtheorem@nonumberedlike}}
\def\renewtheorem@numberedlike[#1]#2{\newtheorem{\renewtheorem@envname}[#1]{#2}}
\def\renewtheorem@nonumberedlike#1{  
	\def\renewtheorem@caption{#1}
	\edef\renewtheorem@nowithin{\noexpand\newtheorem{\renewtheorem@envname}{\renewtheorem@caption}}
	\renewtheorem@thirdpar
}
\def\renewtheorem@thirdpar{\@ifnextchar[{\renewtheorem@within}{\renewtheorem@nowithin}}
\def\renewtheorem@within[#1]{\renewtheorem@nowithin[#1]}
\makeatother

\usepackage{amsthm}

\newtheorem{assumption}{}

\renewtheorem{theorem}{}
\renewtheorem{lemma}{}
\renewtheorem{corollary}{}
\renewtheorem{remark}{}

\author{%
  Valeriy Avanesov \\
  WIAS Berlin\\
  avanesov@wias-berlin.de\\
}

\begin{document}

\renewcommand{\thedefinition}{Definition \thesection.\arabic{definition}}
\renewcommand{\thelemma}{Lemma \arabic{lemma}}
\renewcommand{\thetheorem}{Theorem \arabic{theorem}}
\renewcommand{\theassumption}{Assumption \arabic{assumption}}
\renewcommand{\theremark}{Remark \arabic{remark}}
\renewcommand{\thecorollary}{Corollary \arabic{corollary}}

\maketitle

\begin{abstract}%
	The ever-growing size of the datasets renders well-studied learning techniques, such as Kernel Ridge Regression, inapplicable, posing a serious computational challenge.
	Divide-and-conquer is a common remedy, suggesting to split the dataset into disjoint partitions, obtain the local estimates and average them, it allows to scale-up an otherwise ineffective base approach.
	In the current study we suggest a fully data-driven approach to quantify uncertainty of the averaged estimator.
	Namely, we construct simultaneous element-wise confidence bands for the predictions yielded by the averaged estimator on a given deterministic prediction set.
	The novel approach features rigorous theoretical guaranties for a wide class of base learners with Kernel Ridge regression being a special case.
	As a by-product of our analysis we also obtain a sup-norm consistency result for the divide-and-conquer Kernel Ridge Regression.
	The simulation study supports the theoretical findings.%
\end{abstract}

\newcommand{\A}{\mathcal{A_\rho}}
\newcommand{\tX}{\tilde{X}}
\newcommand{\tXt}{\tilde{X}_t}
\newcommand{\trace}{\mathcal{T(\rho)}}
\newcommand{\Ltwo}{{L}_2\brac{\mathcal{X}, \pi}}
\newcommand{\fhat}{\hat{\mathscr{f}}_n}
\newcommand{\fhatm}{\hat f_p}
\newcommand{\fhatmT}{\hat{\mathsf{f}}_p}
\newcommand{\ftT}{\tilde{\mathsf{f}}_p}
\newcommand{\ftTb}{\tilde{\mathsf{f}}_p^\flat}
\newcommand{\fhatmTb}{\hat{\mathsf{f}}_p^\flat}
\newcommand{\fhatmb}{\hat f_p^\flat}
\newcommand{\fbar}{\bar f}
\newcommand{\fbarT}{\bar{\mathsf{f}}}
\newcommand{\fbarTb}{\bar{\mathsf{f}}^\flat}
\newcommand{\fbarb}{\bar f^\flat}
\newcommand{\Hknorm}[1]{\left\|#1\right\|_{\mathcal{H}_k}}
\newcommand{\HknormSq}[1]{\left\|#1\right\|_{\mathcal{H}_k}^2}
\newcommand{\ftruerhoT}{\mathsf{f}^*_\rho}
\newcommand{\ftrueT}{\mathsf{f}^*}
\newcommand{\Hsnullnorm}[1]{\left\|#1\right\|_{\mathcal{H}^{s_\circ}}}
\newcommand{\lb}{l^\flat_\alpha}
\newcommand{\ub}{u^\flat_\alpha}
\newcommand{\sumi}{\sum_{i=1}^n}
\newcommand{\sumj}{\sum_{j=1}^{\infty}}
\newcommand{\sump}{\sum_{p=1}^P}
\newcommand{\sumt}{\sum_{t=1}^T}
\newcommand{\hatSigma}{\hat \Sigma}
\newcommand{\Sigmastar}{\Sigma^*}
\newcommand{\snull}{s_\circ}
\newcommand{\Lrho}{L_{\rho}}
\newcommand{\normSq}[1]{\left\|#1\right\|^2}
\newcommand{\norm}[1]{\left\|#1\right\|}
\newcommand{\normF}[1]{\left\|#1\right\|_F}
\newcommand{\infnorm}[1]{\left\|#1\right\|_{\infty}}
\newcommand{\tinfnorm}[1]{\left\|#1\right\|_{\widetilde{\infty}}}
\newcommand{\normOne}[1]{\left\|#1\right\|_1}
\newcommand{\fstar}{f^*}
\newcommand{\frho}{f^*_{\rho}}
\newcommand{\fstarbold}{\boldsymbol{{f}}^*}
\newcommand{\R}{\mathbb{R}}
\newcommand{\gboldm}{\boldsymbol{{g}}(p)}
\newcommand{\gboldbm}{\boldsymbol{{g}}^\flat(p)}
\newcommand{\brac}[1]{\left(#1\right)}
\newcommand{\cbrac}[1]{\left\{#1\right\}}
\renewcommand{\H}{\mathcal{H}_k}
\newcommand{\Hsnull}{\mathcal{H}^{s_\circ}}
\newcommand{\Hs}{\mathcal{H}^{s}}
\newcommand{\f}{\mathbf{f}}
\newcommand{\fhatbold}{\mathbf{\hat f}}
\newcommand{\fhatboldm}{\mathbf{\hat f}(p)}
\newcommand{\fhatboldmb}{\mathbf{\hat f}^\flat(p)}
\newcommand{\fbarbold}{\mathbf{\bar f}}
\newcommand{\fbarboldb}{\mathbf{\bar f}^\flat}
\newcommand{\ftruebold}{\mathbf{f^*}}
\newcommand{\ftrueboldrho}{\mathbf{f}^*_{\rho}}
\newcommand{\ftruerho}{f^*_{\rho}}
\newcommand{\dotprod}[2]{\langle#1,#2\rangle}
\newcommand{\param}{\boldsymbol{\theta}}
\newcommand{\gdelta}{\boldsymbol{\gamma}_{\delta}}
\newcommand{\ghat}{\boldsymbol{\hat \gamma}}
\newcommand{\gbar}{\boldsymbol{\bar \gamma}}
\newcommand{\ggamma}{\boldsymbol{\gamma}}
\newcommand{\ggammab}{\boldsymbol{\gamma}^\flat}
\newcommand{\trueparam}{\boldsymbol{\theta^*}}
\newcommand{\trueparamrho}{\boldsymbol{\theta^*_\rho}}
\newcommand{\hatparam}{\boldsymbol{\hat \theta}}
\newcommand{\hatparamm}{\boldsymbol{\hat \theta}_p}
\newcommand{\hatparammb}{\boldsymbol{\hat \theta}_p^\flat}
\newcommand{\E}[1]{\mathbb{E}\left[#1\right]}
\newcommand{\empE}[1]{\mathbb{E_D}\left[#1\right]}
\newcommand{\Eb}[1]{\mathbb{E}^{\flat}\left[#1\right]}
\newcommand{\Var}[1]{\mathrm{Var}\left[#1\right]}
\newcommand{\Varb}[1]{\mathrm{Var}^\flat\left[#1\right]}
\newcommand{\score}{\nabla \zeta}
\newcommand{\Psit}{\Psi^T}
\newcommand{\Psimt}{\Psi^T_p}
\newcommand{\targetPsi}{\tilde{\Psi}}
\newcommand{\targetK}{\tilde{K}}
\newcommand{\Psim}{\Psi_p}
\newcommand{\eps}{\boldsymbol{\varepsilon}}
\newcommand{\epst}{\boldsymbol{\varepsilon}^T}
\newcommand{\epsm}{\boldsymbol{\varepsilon}(p)}
\newcommand{\epsmt}{\boldsymbol{\varepsilon}(p)^T}
\newcommand{\Drho}{D_\rho}
\newcommand{\DrhoSq}{D_\rho^2}
\newcommand{\DrhoSqInv}{D_\rho^{-2}}
\newcommand{\Drhom}{D_\rho(p)}
\newcommand{\DrhomSq}{D_\rho^2(p)}
\newcommand{\DrhomSqInv}{D_\rho^{-2}(p)}
\newcommand{\Prob}[1]{\mathbb{P} \left\{#1\right\}}
\newcommand{\Probb}[1]{\mathbb{P}^\flat \left\{#1\right\}}
\newcommand{\DrhoInv}{D_\rho^{-1}}
\newcommand{\DrhomInv}{D_\rho^{-1}(p)}
\newcommand{\barparam}{\boldsymbol{\bar \theta}}
\newcommand{\inv}[1]{#1^{-1}}
\newcommand{\MInv}{\inv{M}}
\newcommand{\phiinf}{C_\phi}
\newcommand{\C}{{\mathrm{C}}}
\let\oldc=\c
\renewcommand{\c}{{\mathrm{c}}}
\newcommand{\Rclt}{\mathrm{R_{CLT}}}
\newcommand{\Rb}{\mathrm{R^{\flat}}}
\newcommand{\tr}[1]{\mathrm{tr}\brac{#1}}
\renewcommand{\det}[1]{\mathrm{det}\brac{#1}}
\newcommand{\g}{\mathfrak{g}}
\newcommand{\x}{\mathrm{x}}
\newcommand{\p}{\mathfrak{p}}
\newcommand{\s}{{\sigma_{\rho}}(S)}
\renewcommand{\t}{\mathrm{t}}
\renewcommand{\u}{\mathrm{y}}
\renewcommand{\r}{\mathrm{r}}
\newcommand{\X}{\mathcal{X}}
\newcommand{\matern}{Matérn}
\newcommand{\kdd}{k(\cdot, \cdot)}
\newcommand{\abs}[1]{\left|#1\right|}
\newcommand{\minusroot}{^{-1/2}}
\newcommand{\proot}{^{1/2}}
\newcommand{\Frho}{\brac{M^2 + \rho I}}
\newcommand{\indicator}[1]{\mathbb{I}[#1]}
\newcommand{\given}{\middle|}
\newcommand{\Zpt}{Z_{p,t}}
\newcommand{\varb}{\mathcal{G}_\rho(S)}
\newcommand{\Normal}[2]{\mathcal{N}\brac{#1,#2}}
\newcommand{\Vn}{\mathcal{V}_{\rho}(n)}
\newcommand{\VN}{\mathcal{V}_{\rho}(N)}
\newcommand{\VS}{\mathcal{V}_{\rho}(S)}
\newcommand{\B}{\mathcal{B}_\rho}
\newcommand{\DP}{\Delta\mathbb{P}}
\newcommand{\Zbp}{Z^\flat_p}

\section{Introduction}\label{secintro}
Consider a training sample of $N$ i.i.d. response-covariate pairs $(y_i, X_i)$ from $\R\times \X$ such that
\begin{equation}
  y_i = \fstar(X_i) + \varepsilon_i
\end{equation}
for a compact $\X\subseteq \R^d$, and noise  $\varepsilon_i$, which is centered conditional on $X_i$.
That is, we do not require independence of $X_i$ and $\varepsilon_i$, allowing for heteroscedasticity.
Throughout the paper we presume $X_i$ is distributed w.r.t. some continuous measure $\pi$.
Also, let a deterministic prediction set $\cbrac{\tXt}_{t \in [T]}~\subset~\X$ of size $T$ be also given. 
We wish to construct simultaneous confidence intervals, covering the elements of the vector
\begin{equation}
  \ftrueT \coloneqq \brac{\fstar(\tX_1), \fstar(\tX_2), ..., \fstar(\tX_T) }^\top
\end{equation}
with a predefined probability $\alpha$.
That is, we construct element-wise credible bounds for the vector $\ftrueT$ of values of the unknown function $\fstar$.

The problem can be naturally solved in a Bayesian setting with Gaussian Process Regression (GPR) being a popular and well studied option  \citep{RW,bilionis2012multi,he2011single,chen2013gaussian}.
Having imposed a prior over the true function $\fstar$ and presumed a distribution of noise, one can obtain a posterior distribution over $\fstar$ and construct a Bayesian credible set over $\ftrueT$. Properties of the GPR posterior are well understood \citep{vaart2011information,van2009adaptive,Bhattacharya2017}.
In contrast, the current study follows a fully frequentist line of thought.
That is, no prior is imposed and existence of a single true function $\fstar$ is presumed.
Interest for the problem in a frequentist setting has been rising recently. \cite{Yang2017} suggest to use GPR posterior to construct the sup-norm confidence bands relying on an asymptotic distribution. Fully data-driven approaches have been developed on top of Kernel Density Estimation \citep{Hall2013,Cheng2019}.

Another issue we address in this paper is the high computational cost of some regression approaches (e.g. GPR and Kernel Ridge Regression (KRR), yielding the same point-estimate), rendering them inapplicable to the modern-day datasets.
To that end, among other options, mainly exploiting low-rank approximations \citep{scholkopf1998nonlinear,williams2001using,Rudi2015,bach2013sharp,fine2001efficient}, a divide-and-conquer algorithm may be employed.
Assume, we are given a base algorithm $\A$ (for $\rho$ being a hyperparameter), which, given a sample $\{(y_i, X_i)\}_{i=1}^n$ of an arbitrary size $n$, yields an estimator $\fhat$ of the true function $\fstar$.
The suggestion is to split the dataset into $P$ disjoint partitions and run the algorithm independently for each of them, thus obtaining $P$ estimators $\fhatm$ and eventually averaging them.
Not only does this allow for a trivial distributed implementation, but also drastically cuts the CPU-time if the computational complexity of $\A$ is super-linear.
E. g. for GPR (and KRR) time complexity is $O(N^3)$, while the use of the divide-and-conquer technique allows to bring it down to $O(N^3/P^2)$.
The strategy features the consistency guarantees for parametric $\A$ of moderate dimensionality $d(n)/n \rightarrow \c\in(0,1)$ \citep{Rosenblatt2016}.
\cite{Deisenroth2015} have sped-up GPR using divide-and-conquer.
In \citep{Zhang2015} the Fast-KRR approach has been suggested, being the divide-and-conquer strategy, applied to KRR as the base algorithm. The properties of the point-estimate have been extensively studied \citep{szabo2015frequentist,Lin2017,Mucke2018}.
\cite{avanesov2020} has suggested a bootstrap approach, quantifying uncertainty of the  Fast-KRR estimator in terms of $\Ltwo$-norm, which is of little use for the problem at hand as long as $\tXt$ are not drawn from $\pi$ and we wish to construct element-wise bounds.
In the current study we suggest to enrich the divide-and-conquer paradigm with a non-trivial bootstrap procedure (resembling the one suggested in \cite{avanesov2020}), yielding element-wise confidence bands for the averaged prediction.

The further hardship we cope with is heteroscedasticity. GPR presumes $\varepsilon_i$ to be Gaussian and i.i.d., and the credible bands it yields are of limited value under violation of the assumption.
The frequentist study by \cite{Yang2017} relies on the same noise assumptions, while \cite{avanesov2020} drops Gaussianity, still requiring the noise to be i.i.d.
In this paper we do not impose either.

The contribution of the paper consists in enriching the divide-and-conquer paradigm with a novel approach to constructing element-wise confidence bands for heteroscedastic regression.
The method is computationally efficient and embarrassingly parallel.
It does not rely on any kind of asymptotic distribution, being fully data driven.
To the best of the author's knowledge it is the first method of the kind.
Validity of the approach is rigorously justified for a wide class of base algorithms.
The developed theory is applied to Fast-KRR as a motivating example.
As a byproduct of our analysis of KRR, we also obtain a sup-norm high-probability consistency bound for the averaged estimator.
The formal results are supported by a simulation study.

\section{The method}
Consider an algorithm $\A$ producing an estimator $\fhat$ given a sample of an arbitrary size $n$.
Following the divide-and-conquer strategy, we split the set of indices $\{1,2,..,N\}$ into $P$ disjoint sets $\{S_p\}_{p=1}^P$ of size $S\coloneqq\abs{S_p} = N/P$ (we presume $N/P$ is natural for simplicity) and apply $\A$ independently to each of the partitions
\begin{equation}\label{krrm}
  \fhatm = \A\brac{\cbrac{X_i}_{i \in S_p}}.
\end{equation}
Then average the produced estimators
\begin{equation}
  \fbar \coloneqq \inv{P}\sum_{p=1}^P \fhatm.
\end{equation}
Similarly to $\ftrueT$ denote
\begin{equation}
  \fhatmT \coloneqq \brac{\fhatm(\tX_1), \fhatm(\tX_2), ...,
  \fhatm(\tX_T)}^\top
\end{equation}
and its expectation as $\ftruerhoT \coloneqq \E{\fhatmT}$.
In the same way we introduce
\begin{equation}
  \fbarT \coloneqq \brac{\fbar(\tX_1), \fbar(\tX_2), ...,
  \fbar(\tX_T)}^\top.
\end{equation}
Now we wish to quantify the deviation of $\fbarT$ from $\ftrueT$ via providing the bands $l_\alpha, u_\alpha\in \R^T$ such that
\begin{equation}
  \Prob{l_\alpha < \fbarT - \ftrueT < u_\alpha} = 1 - \alpha
\end{equation}
for a given confidence level $\alpha$.
Here and below the signs $>$ and $<$ used for the vectors of the same dimensionality mean element-wise comparison.
The complicated nature of the distribution of $\fbar$ makes it difficult to choose the bands directly.
To that end we suggest to define a bootstrap procedure, resampling the local estimators $\fhatmT$ with replacement $P$ times and averaging them, thus defining the random bootstrap measure $\mathbb{P}^\flat$. Namely,
\begin{equation}\label{fbootdef}
  \fhatmTb \sim U\left[\cbrac{\fhatmT}_{p=1}^P\right] \text{ for all } p \in [P]
\end{equation}
\begin{equation}
  \fbarTb \coloneqq P^{-1}\sump \fhatmTb.
\end{equation}
Finally, choose the bands $\lb, \ub \in \R^T$ s.t. the bootstrap counterpart of the estimator is covered with the nominal probability of $1-\alpha$
\begin{equation}
  \Probb{\lb < \fbarTb - \fbarT < \ub} = 1-\alpha,
\end{equation}
while the chance for each component $\fbarTb_t$ to violate the respective bound is the same
\begin{equation}
  \Probb{\brac{\lb}_t > \fbarTb_t - \fbarT_t} = \Probb{\fbarTb_t - \fbarT_t > \brac{\ub}_t} = \c \in (0,1) \text{ for all } t \in [T].
\end{equation}

\begin{remark}
  As \eqref{fbootdef} suggests, we will draw $P$ times with replacement during the bootstrap stage, thus defining the bootstrap sample to be of the same size as the real one. 
  Choosing a smaller number of $\fhatmTb$ would lead to worse remainder term in \ref{generaltheorem}, while choosing a larger one would not improve it.
\end{remark}
We establish an upper bound for the divergence
\begin{equation}
  \DP \coloneqq \sup_{l,u \in \R^T} \abs{\Prob{l < \fbarT - \ftrueT < u} - \Probb{l <\fbarTb - \fbarT < u}},
\end{equation}
justifying the use of $\lb$ and $\ub$ instead of $l_\alpha$ and $u_\alpha$.
As a motivating example of the base algorithm $\A$ we employ Kernel Ridge Regression (KRR)
as performance of the point-estimate $\fbar$ has been extensively studied for such a choice \citep{Zhang2015,szabo2015frequentist,Lin2017,Mucke2018}. Consider a kernel operator $\kdd$, inducing a RKHS $\H$ endowed with a norm
\begin{equation}\label{rkhsnormdef}
  \norm{f}^2_{\H} = \sumj \frac{\dotprod{f}{\phi_j}^2}{\mu_j},
\end{equation}
where $\phi_j \in L_2(\X, \pi)$ and $\mu_j$ are orthonormal eigenfunctions and eigenvalues of $\kdd$ w.r.t. $\pi$ (their existence is provided by Mercer's theorem). The dot-product is also defined w.r.t $\pi$.
The KRR estimator is obtained based on a dataset $\cbrac{(y_i, X_i)}_{i = 1 }^n$ through the following optimization problem
\begin{equation}\label{krr}
  \fhat \coloneqq \arg\max_{f} \cbrac{-\frac{1}{2n} \sumi \brac{y_i - f(X_i)}^2 - \frac{\rho}{2} \Hknorm{f}^2}.
\end{equation}
The behavior of the estimator has been investigated in great detail \citep{Caponnetto2007,Mendelson2010,van2006empirical,koltchinskii2006local,zhang2005learning}.

\begin{remark}
	The suggested bootstrap procedure is substantially non-trivial.
	Standard bootstrap schemes usually suggest to resample the original data $(y_i, X_i)$, while we resample the values $\fhatmT$ of the local estimators $\fhatm$ instead, which resembles \citep{avanesov2020}.
	Further, in contrast to the classical bootstrap schemes, we do not construct the confidence set for a mean of the objects being bootstrapped $\E{\fhatmT} = \ftruerhoT$, but for the values of the unknown function $\ftrueT$ instead.
\end{remark}

\begin{remark}
    The computational complexity of the bootstrap procedure is $O(P)$, so the overhead it poses is only marginal compared to obtaining the point-estimate.
    A slightly different version of bootstrap may be employed to further improve efficiency of a distributed implementation.
    That is, sample $P$ i.i.d. multipliers $w_p$ s.t.  $\E{w_p} = \Var{w_p} = 1$ and define a multiplier bootstrap counterpart $\fbarT^w \coloneqq P^{-1}\sum_{p \in [P]} w_p \fhatmT$.
    All the results obtained for $\fbarTb$ also hold for $\fbarT^w$, as their first two moments coincide.
\end{remark}

\section{Theoretical analysis}

We open the section with some notation. 
For a function $\norm{\cdot}$ denotes the $L_2(\X, \pi)$-norm, namely $\norm{f}^2 = \int f^2 d\pi$.
For a vector $\norm{\cdot}$ denotes the $\ell_2$-norm. 
$\infnorm{\cdot}$ denotes the sup-norm for functions (the largest absolute value), vectors and matrices (the largest absolute value of an element).
$I$ stands for an identity operator.
We also use $\c$ and $\C$ as generalized positive constants, whose values may differ from line to line and do not depend on $N,P,T$ and $\fstar$. We use $\asymp$ to denote equality up to a multiplicative constant -- namely, $a_i \asymp b_i$ implies $\c b_i \le a_i \le \C b_i$ for all $i$. For two numbers $a$ and $b$ we use $a \vee b$ to denote the maximum of the two.

\subsection{General theory}
The theoretical analysis we develop is fairly general. 
That is, it applies to any learning algorithm $\A$ with hyperparameter $\rho$ (be it uni- or multivariate, discrete or continuous), accepting an arbitrary number $n$ of response-covariate pairs $(y_i, X_i)$ and producing an estimator $\fhat : \X \rightarrow \R$.\footnote{Throughout the paper we formulate the properties of the base algorithm $\A$ w.r.t. a sample of size $n$ which is not to be confused with $N$ being used to denote the size of the sample we apply the divide-and-conquer strategy to.} 
Surely, some assumptions are required.
Specifically, we require control of variance and bias of the algorithm. 
But first, we need to define the mean of the estimator $\ftruerho \coloneqq \E{\fhat} : \X \rightarrow \R$.
\begin{assumption}[Variance control]\label{varianceass}
  Let there be $\Vn$ for the algorithm $\A$ such that for all positive $\x$ and natural $n$
  \begin{equation}
    \Prob{\sup_{t \in [T]}\abs{\brac{\fhat - \ftruerho}(\tXt)} \ge \brac{1+\x}\Vn } \le \exp(-\x).
  \end{equation}
\end{assumption}
\begin{assumption}[Bias control]\label{biasass}
  Let there be $\B$ for the algorithm $\A$ such that for all $\rho>0$ and natural $n$ \begin{equation}
    \sup_{t \in [T]}\abs{\brac{\fstar - \ftruerho}(\tXt)} \le \B.
  \end{equation}
\end{assumption}
We also impose a lower bound on the variance of $\fhatm(\tXt)$
\begin{assumption}\label{lowersass}
  Let there exist a positive $\s$ such that for all $t \in [T]$
  \begin{equation}
    \s \le \brac{\Var{\fhatm(\tXt)}}^{1/2}.
  \end{equation}
\end{assumption}
Also introduce a shorthand notation
\begin{equation}
\varb = \frac{\VS}{\s}.
\end{equation}  
At this point we are ready to formulate the general bootstrap validity result.
\begin{theorem}\label{generaltheorem}
  Let \ref{varianceass}, \ref{biasass} and \ref{lowersass} hold. Also let $P$ be sufficiently large, namely
  \begin{equation}\label{plowerboundth}
    \frac{\varb \log(TP) \log^{2}P }{P} \le 8^{-1}.
  \end{equation}
   Then 
  \begin{equation}
    \begin{split}
      \DP &\le \C {P^{-1/6}} \brac{\varb + \infnorm{\fstar}^{2/3}} \log^{7/6}(TP) \log(P) \brac{\sqrt[3]{\log T}+1} \\
      & +  \frac{\C\sqrt{P}\B \brac{\sqrt{\log T} +1}}{\s}  .
    \end{split}
  \end{equation}
\end{theorem}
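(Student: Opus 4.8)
\emph{Proof plan.} We must bound $\DP$, the rectangle-uniform discrepancy between the law of $\fbarT - \ftrueT$ and the bootstrap law of $\fbarTb - \fbarT$; the plan is a high-dimensional Gaussian coupling over the class of rectangles in $\R^T$, in the spirit of Chernozhukov, Chetverikov and Kato. Put $\ftruerhoT \coloneqq \E{\fhatmT}$ and decompose
\[
  \fbarT - \ftrueT \;=\; \inv{P}\sump \xi_p \;+\; r, \qquad \xi_p \coloneqq \fhatmT - \ftruerhoT, \quad r \coloneqq \ftruerhoT - \ftrueT, \quad \infnorm{r} \le \B,
\]
the last bound being Assumption~\ref{biasass}. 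Since the partitions $S_p$ are disjoint and the data i.i.d., the $\xi_p$ are i.i.d.\ and centered, with covariance $\Sigmastar \coloneqq \Var{\fhatmT}$ whose diagonal is at least $\s^2$ by Assumption~\ref{lowersass}; and since Assumption~\ref{varianceass} is stated for the \emph{absolute} supremum over $t$, each coordinate $\xi_p(\tXt)$ is sub-exponential with Orlicz scale $\asymp\VS$, so the moment conditions required below hold with bound parameter $\asymp\VS$ and variance floor $\s^2$. Conditionally on the data, $\fbarTb - \fbarT = \inv{P}\sump\brac{\fhatmTb - \fbarT}$ is an average of $P$ i.i.d.\ draws from the empirical measure on $\cbrac{\fhatmT - \fbarT}_{p=1}^P$, whose conditional covariance is $\hatSigma \coloneqq \inv{P}\sump\brac{\fhatmT - \fbarT}\brac{\fhatmT - \fbarT}^\top$.

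\textbf{Three comparison steps.} First I would apply the Chernozhukov--Chetverikov--Kato high-dimensional CLT over rectangles to the original average, so that $\Prob{l < \inv{P}\sump\xi_p < u}$ lies within $\C P^{-1/6}\varb^{1/3}\log^{7/6}(TP)\brac{\sqrt[3]{\log T}+1}$ of $\Prob{l < \eta < u}$ for $\eta \sim \Normal{0}{\inv{P}\Sigmastar}$, uniformly in $l,u$; the bound parameter $\VS$ and floor $\s^2$ enter only through $\varb = \VS/\s$. Second, on a high-probability event I would control $\infnorm{\hatSigma - \Sigmastar}$ by a Bernstein-type bound and then pass from $\Normal{0}{\inv{P}\Sigmastar}$ to $\Normal{0}{\inv{P}\hatSigma}$ via the Gaussian comparison inequality; writing $\hatSigma$ through the un-centered products $\fhatm(\tXt)\fhatm(\tX_s)$, whose range is $\asymp\infnorm{\fstar}^2$ once Assumptions~\ref{varianceass}--\ref{biasass} are combined to give $\infnorm{\fhatmT} \lesssim \infnorm{\fstar} + \B + \VS\log(TP)$, the cube root in that inequality is exactly what manufactures the $\infnorm{\fstar}^{2/3}$ contribution, while the hypothesis~\eqref{plowerboundth} is precisely what keeps the comparison parameter below $1$ (so the accompanying logarithmic correction is harmless) and the diagonal of $\hatSigma$ bounded away from zero. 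Third, I would apply the same CLT conditionally to $\inv{P}\sump\brac{\fhatmTb - \fbarT}$ to reach $\Normal{0}{\inv{P}\hatSigma}$ (this step reintroduces $\infnorm{\fstar}$, but under a cube root it is dominated by the term from step two). Chaining the three comparisons yields the first line of the claimed bound for $\sup_{l,u}\abs{\Prob{l < \fbarT - \ftruerhoT < u} - \Probb{l < \fbarTb - \fbarT < u}}$.

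\textbf{The bias shift.} It remains to absorb $r$. Since $\Prob{l < \fbarT - \ftrueT < u} = \Prob{l - r < \fbarT - \ftruerhoT < u - r}$ and $\fbarT - \ftruerhoT$ is coupled to $\eta \sim \Normal{0}{\inv{P}\Sigmastar}$ whose diagonal is $\ge \s^2/P$, Nazarov's anti-concentration inequality bounds the effect of a shift of size $\B$ by $\C\sqrt{P}\,\B\,\brac{\sqrt{\log T}+1}/\s$ --- the inverse standard deviation $\sqrt{P}/\s$ times the usual dimensional price $\sqrt{\log T}$ --- which is the second summand. Summing the errors of all steps and discarding the exceptional events (each of probability $\lesssim (TP)^{-1}$, absorbed into the leading term) completes the proof.

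\textbf{Main obstacle.} The delicate part will be the bootstrap side: I must (a) upgrade the one-sided, affine high-probability guarantee of Assumption~\ref{varianceass} into genuine sub-exponential and moment statements that the CCK theorems accept, and (b) run steps two and three with the \emph{random} covariance $\hatSigma$, i.e.\ perform them on the event that $\hatSigma$ is close to $\Sigmastar$, show that event has probability $\ge 1-(TP)^{-1}$ under~\eqref{plowerboundth}, and splice conditional-on-data bounds with an unconditional one. Tracking how the large un-centered values $\infnorm{\fhatmT} \asymp \infnorm{\fstar}$ propagate through the entrywise concentration of $\hatSigma$ is what both forces and delivers the $\infnorm{\fstar}^{2/3}$ term; matching the precise logarithmic exponents in the statement is the fiddliest bookkeeping.
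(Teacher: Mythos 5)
Your plan reproduces the paper's argument essentially verbatim: a three-step rectangle-uniform Gaussian comparison (CCK CLT for $\fbarT - \ftruerhoT$, covariance concentration for $\hatSigma$ followed by a Gaussian comparison, and a conditional CCK CLT for the bootstrap average), spliced together with Nazarov's anti-concentration inequality to absorb the deterministic bias shift $\ftruerhoT - \ftrueT$. This matches Lemmas \ref{gar1}--\ref{covcovcov} and the final triangle-inequality assembly in the paper, up to a harmless reordering of steps, including the scaling by the diagonal of $\Var{\fhatmT}$ and the role of \eqref{plowerboundth} in keeping the bootstrap variance floor of order $\s^2$.

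One exponent is off. You assert that the first CLT application contributes $\C P^{-1/6}\varb^{1/3}\log^{7/6}(TP)(\sqrt[3]{\log T}+1)$, taking the CCK moment parameter to be the Orlicz scale $\asymp\varb$ of the normalized coordinates $\Zpt \coloneqq \s^{-1}(\fhatmT - \ftruerhoT)_t$. But \ref{cltlemma} requires $\E{\abs{\Zpt}^{3}} \le B_n$, and Assumption \ref{varianceass} only gives $\E{\abs{\Zpt}^{3}} \lesssim \varb^{3}$ while Assumption \ref{lowersass} pins the variance down to $\ge 1$; with no matching upper bound on $\E{\Zpt^2}$ the binding choice is $B_n \asymp \varb^3$, so $\brac{B_n^2\log^7(TP)/P}^{1/6} \asymp \varb\, P^{-1/6}\log^{7/6}(TP)$. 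That linear $\varb$ (not $\varb^{1/3}$) is precisely where the $\varb$ in the theorem's first line comes from; the $\varb^{2/3}$ emerging from the covariance-comparison cube root is then absorbed since $\varb \ge \c$. As written, your CLT step would produce a bound strictly smaller in $\varb$ than what the hypotheses support, so this needs correcting when you carry out the ``upgrade the tail to moments'' step that you rightly flag as the delicate part.
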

The remainder term may seem frustratingly cumbersome at first. Omitting logarithmic terms and assuming the bound in \ref{varianceass} is tight in the sense $\s \asymp \VS$ (which might be hard to guarantee in practice, yet instructive to consider), we conclude, that for the remainder term to be small, it's sufficient to guarantee
\begin{equation}
  \s P^{-1/2} \gg  \B.
\end{equation}
In fact, the condition imposes {\it undersmoothness} (low bias -- high variance regime), which is a common (and in a sense unavoidable) assumption employed to justify validity of the confidence sets \citep{knapik2011bayesian,szabo2015frequentist,ray2017adaptive,Yang2017,avanesov2020}. 
Really, $\s P^{-1/2} \asymp \VS P^{-1/2}$ is effectively an upper bound for the variance of the averaged estimator $\fbar$. 
A need for undersmoothness does indeed forbid the otherwise desired bias-variance trade-off, which suggests to equalize bias and variance. 
The necessary sacrifice can be quantified for a particular choice of $\A$, as we do in Section \ref{KRRsec} for KRR.

The proof strategy is common for bootstrap validity results and relies on Gaussian approximation. The first step is approximation of $\fbarT - \ftruerhoT$ with Gaussian $\gamma$. That is, by the means of CLT by \cite{Chernozhukov2017} we obtain an approximation uniform (just like the rest of approximations in the sketch)  over $l,u \in \R^T$ 
\begin{equation}\label{firststep}
  \Prob{l < \fbarT - \ftruerhoT < u} \approx \Prob{l <  \gamma < u}.
\end{equation}
Using uniformity of the approximation we also have
\begin{equation}
  \Prob{l < \fbarT - \ftrueT  < u} \approx \Prob{l <  \gamma - \brac{\ftrueT - \ftruerhoT} < u }
\end{equation}
and can use the anti-concentration result, namely Nazarov's inequality \citep{nazarov} 
\begin{equation}\label{app1}
  \Prob{l < \fbarT - \ftrueT  < u} \approx \Prob{l <  \gamma < u }.
\end{equation}
In fact, as Nazarov's inequality dictates, for the approximation to hold it is necessary to ensure smallness of bias in comparison to variance, which is exactly where the need for undersmoothness comes from. The next step is to approximate $\fbarTb - \fbarT$ with another Gaussian vector $\gamma^\flat$ 
\begin{equation}\label{app2}
  \Probb{l < \fbarTb - \fbarT  < u} \approx \Probb{l  <  \gamma^\flat < u }.
\end{equation}
Being both centered, $\gamma$ and $\gamma^\flat$ also have similar (in terms of sup-norm) covariance matrices. Empowered with this observation we can apply Gaussian approximation obtained by \cite{Chernozhukov2013}, which yields 
\begin{equation}\label{app3}
  \Prob{l  <  \gamma < u } \approx \Probb{l  <  \gamma^\flat < u }.
\end{equation}
Combining \eqref{app1}, \eqref{app2} and \eqref{app3}, we arrive to $\DP \approx 0$. The strategy is implemented in Section \ref{secgenproof}.

\begin{remark}
  The decay of the remainder term in \ref{generaltheorem} is driven by $P^{-1/6}$, which is mainly due to the high-dimensional central limit theorem by  \cite{Chernozhukov2017} (\ref{cltlemma}). 
  Therein authors hypothesise optimality of the term up to the logarithmic factors.
  
  Another strategy would be to use a more classical toolset, relying on the multivariate Bentkus' CLT \citep{doi:10.1137/S0040585X97981123} and apply Pinsker's inequality for Gaussian comparison. 
  This would make $P$ enter the remainder term with square root instead of the power of $1/6$. 
  Yet due to the low-dimensional nature of the results, the size $T$ of the prediction set will not enter the result only logarithmically anymore. 
  We believe, this proof strategy would produce $\DP = \tilde{O}(\sqrt{T/{P}})$ (up to polylog factors).
  There is another reason for us not to opt for this approach. 
  It would require a lower bound for the smallest eigenvalue of $\Var{\fhatmT}$, which would impose a restriction on the choice of $\tXt$, forbidding, for instance, presence of two similar covariates.
\end{remark}

\subsection{Kernel Ridge Regression as the base algorithm}\label{KRRsec}
First of all, we impose a polynomial rate of decay on the eigenvalues of $\kdd$.
\begin{assumption}[Polynomial eigendecay]\label{polyeigen}
  Let there exist a constant $b>1$ s.t. for  the $j$-th largest eigenvalue $\mu_j$ of $\kdd$
  \begin{equation}
    \mu_j \asymp j^{-b}.
  \end{equation}
\end{assumption}
A prominent example of a kernel which exhibits this behavior is the \matern~kernel.
As demonstrated in \citep{Yang2017}, \ref{polyeigen} holds for it with $b=2\nu+d$,  $\nu$ being its smoothness index. 
Another assumption we impose is uniform boundness of eigenfunctions. \cite{Yang2017a,Bhattacharya2017} prove the assumption holds for \matern~kernel under uniform and normal distributions of covariates on a compact set.
\begin{assumption}[Boundness of eigenfunctions]\label{eigenbound}
  Denote a normalized eigenfunction corresponding to the $j$-th largest eigenvalue as $\phi_j(\cdot)$.
  Let there exist a positive constant $\phiinf$ s.t. \linebreak $\sup_j\infnorm{\phi_j} \le \phiinf $.
\end{assumption}
Now consider an integral operator
\begin{equation}
  W_k(f) = \int_{\X} k(X, \cdot) f(X) d \pi(X)
\end{equation}
and denote {\it effective dimension} as
\begin{equation}
  \trace \coloneqq \tr{{\inv{\brac{W_k + \rho I}}W_k}}.
\end{equation}
Further, we impose a Hölder-type source condition \citep{Mucke2018,Lin2017}
\begin{assumption}\label{rass}
  Let there exist $r\in (1/2,1]$ and $h \in \Ltwo$ s.t. 
  \begin{equation}
    \fstar = W_k^r(h). 
  \end{equation}
\end{assumption}
For $r = 1/2$ this assumption boils down to $\fstar \in \H$, but we have to impose a slightly more restrictive requirement.
In conclusion we impose the sub-Gaussianity assumption over noise conditioned on covariate, that being a common relaxation of Gaussianity.
\begin{assumption}[Sub-Gaussianity]\label{subgaussass}
  Let there exist a constant $\g^2$ s.t. for all $\in \R$ and for all $i \in [N]$ a.s.
  \begin{equation}
    \E{\exp(a\varepsilon_i) \given X_i} \le \exp\brac{\frac{\g^2a^2}{2}}.
  \end{equation}
\end{assumption}
Under these assumptions it can be shown that \ref{varianceass} holds (due to \ref{varianceinf})  with 
\begin{equation}\label{krrvbar}
  \Vn = \C\sqrt{\frac{\trace}{n \rho^{1/b}}} \le \frac{\C}{\rho^{1/b}\sqrt{n}},
\end{equation}
while \ref{biasass} holds (by \ref{biaslemmainf}) with 
\begin{equation}
  \B \le \rho^{r-1/(2b)}\norm{h}.
\end{equation}
Equipped with these results it is relatively straightforward (see section \ref{KRRproofsec}) to establish a high-probability bound for the estimator $\fbar$ in sup-norm if $\A$ is KRR, complementing the various bounds on expected risk obtained in \citep{Zhang2015,szabo2015frequentist,Lin2017,Mucke2018}.
\begin{theorem}\label{consistency}
  Let $\A$ be KRR.
  Impose \ref{polyeigen}, \ref{eigenbound}, \ref{rass} and \ref{subgaussass}. Choose  
  \begin{equation}\label{rhodef}
    \rho = \c N^{- \frac{b}{2br' + 1}}
  \end{equation}
  for some $r' \le r$ and also let 
  \begin{equation}\label{ppp}
    P \le \c N^{\frac{2br' - 1}{2br' + 1}}.
  \end{equation} 
  Then for all positive $\x$ on a set of probability at least $1-\exp(-\x)$ we have 
  \begin{equation}\label{infbound}
    \infnorm{\fbar - \fstar}^2 \le \C \x^{5/2} \g^2 N^{-\frac{2br'-1}{2br' + 1}} + \C N^{- \frac{2br-1}{2br' + 1}}\norm{h}^2.
  \end{equation}
\end{theorem}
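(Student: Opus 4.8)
The plan is to split the error around the common mean $\ftruerho=\E{\fhatm}$ of the local estimators,
\[
  \infnorm{\fbar-\fstar}\le\infnorm{\fbar-\ftruerho}+\infnorm{\ftruerho-\fstar},
\]
and bound the two summands separately, the second being deterministic. The bias part is immediate from \ref{biaslemmainf} (the same lemma behind \ref{biasass}): $\infnorm{\ftruerho-\fstar}\le\B\le\rho^{\,r-1/(2b)}\norm{h}$, with $r$ the Hölder index of \ref{rass}. Plugging in $\rho=\c N^{-b/(2br'+1)}$ from \eqref{rhodef} gives $\infnorm{\ftruerho-\fstar}^2\le\C\rho^{\,2r-1/b}\norm{h}^2=\C N^{-(2br-1)/(2br'+1)}\norm{h}^2$, the second term of \eqref{infbound}; note that the true $r$ sits in the numerator while $\rho$ carries $r'\le r$ in the denominator, so the gap $r'<r$ is precisely the undersmoothing slack that \eqref{rhodef} allows.

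The fluctuation term $\fbar-\ftruerho=P^{-1}\sump\brac{\fhatm-\ftruerho}$ is where the real work lies. It \emph{cannot} be controlled by the per-partition sup-norm estimate behind \ref{varianceinf}: that estimate has scale $\VS\asymp\sqrt P\,\VN$ (a single machine sees only $S=N/P$ points), and averaging it naively over the $P$ machines still returns $\infnorm{\fbar-\ftruerho}\lesssim\VS$, which does not vanish under \eqref{ppp}. Instead one must use that the fluctuation is a genuine average of $P$ i.i.d.\ centered terms, so its variance is $P$ times smaller. Concretely, both $\norm{\fbar-\ftruerho}^2$ and $\HknormSq{\fbar-\ftruerho}$ concentrate at the $N$-sample scale, $\C\g^2\trace/N$ and $\C\g^2\trace/(N\rho)$ respectively — the $L_2$ part in the spirit of \citep{Zhang2015,avanesov2020}, the $\mathcal H_k$ part a similar but easier computation, both reducing coordinate-wise in the Mercer basis to averaging $P$ i.i.d.\ sub-exponential variables with single-machine coordinate variance of order $\g^2S^{-1}\brac{\mu_j/(\mu_j+\rho)}^2$. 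These two bounds are then fed into a sup-norm interpolation inequality $\infnorm{f}\le\C\phiinf\,\Hknorm{f}^{1/b}\norm{f}^{\,1-1/b}$, valid under \ref{polyeigen}--\ref{eigenbound} (proved by splitting $\infnorm{f}\le\phiinf\sum_j\abs{\dotprod{f}{\phi_j}}$ at a level $J$, Cauchy--Schwarz on each piece, and optimizing $J$ with $\sum_{j>J}\mu_j\asymp J^{1-b}$). Using $\trace\asymp\rho^{-1/b}$ this gives $\infnorm{\fbar-\ftruerho}^2\le\C\g^2\rho^{-2/b}N^{-1}$ up to logarithmic and polynomial-in-$\x$ factors, and substituting \eqref{rhodef} turns it into $\C\x^{5/2}\g^2N^{-(2br'-1)/(2br'+1)}$, the first term of \eqref{infbound}.

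Combining the two pieces via $(a+b)^2\le 2a^2+2b^2$ and a union bound over the handful of high-probability events (relabelling $\x$, absorbing $\phiinf$ and logarithms into $\C$) yields \eqref{infbound}. I expect the fluctuation term to be the main obstacle, for two reasons. First, one needs concentration of the \emph{averaged} estimator in a norm strong enough to dominate $\infnorm{\cdot}$ yet weak enough to retain the $P$-fold variance reduction, which is exactly what forces the spectral/interpolation detour rather than an off-the-shelf application of \ref{varianceinf}. Second, KRR runs on a random design, so the empirical kernel operator $\hat W_p$ must be compared to $W_k$ uniformly over all $P$ partitions; this is the purpose of \eqref{ppp}, which ensures $S\rho^{1/b}\gtrsim 1$ and hence $\norm{(\hat W_p+\rho I)^{-1}(W_k+\rho I)}=O(1)$ on a high-probability event for every $p$. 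Propagating the deviation parameter $\x$ through these operator-concentration steps (matrix-Bernstein-type bounds are sub-exponential, not sub-Gaussian) and then through the interpolation exponents is what produces the $\x^{5/2}$, and getting that bookkeeping right — rather than any conceptual difficulty — is the bulk of the effort.
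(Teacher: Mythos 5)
Your proposal is correct and follows essentially the same route as the paper: concentration of the \emph{averaged} estimator in both $\mathcal H_k$- and $L_2(\X,\pi)$-norms (\ref{variancebarHK} and \eqref{l2cons}), the bias bound from \ref{biaslemma}/\ref{biaslemmainf}, and the interpolation inequality \ref{mendelson} to transfer these to the sup-norm. The only cosmetic difference is that you apply the triangle inequality in $\infnorm{\cdot}$ after interpolating the variance term, whereas the paper's proof first forms $\Hknorm{\fbar-\fstar}$ and $\norm{\fbar-\fstar}$ by the triangle inequality and then interpolates once; the two orderings differ only by a universal constant.
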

Also, under the same assumptions on the same set we have (as shown in \citep{avanesov2020})
\begin{equation}\label{l2cons}
  \norm{\fbar - \fstar} \le \C \x^{5/4} \g N^{-\frac{br'}{2br' + 1}} + \C N^{- \frac{br}{2br' + 1}} \norm{h}.
\end{equation}
Investigating \eqref{infbound} and \eqref{l2cons} makes it clear, that if we strove for a point estimate only it would be wise to choose $r' = r$ to guarantee optimal bias-variance trade-off (in both ${L}_2(\X,\pi)$-norm and sup-norm), yet our goal is inference and thus we have to abandon optimality for the sake of undersmoothness, choosing $r' < r$.

Finally, we are fully equipped to apply \ref{generaltheorem}. 
Straightforward algebra yields the following result in terms of O-tilde notation, hiding the factors poly-logarithmic in $P$ and $T$. 
\begin{corollary}\label{coroll}
  Let $\A$ be KRR. Impose \ref{lowersass}, \ref{polyeigen}, \ref{eigenbound}, \ref{rass} and \ref{subgaussass}. 
  Then for $N,P,T\rightarrow +\infty$
  \begin{equation}\label{pppp}
    \DP  = \tilde{O}\brac{N^{\frac{1}{2br'+1}}  \brac{P^{-1/6} + N^{-\frac{b(r-r')}{2br'+1}}} }.
  \end{equation}
  Moreover, under a further assumption $\varb \asymp 1$ we have
  \begin{equation}\label{ppppp}
    \DP = \tilde{O}\brac{P^{-1/6} + N^{-\frac{b(r-r')}{2br'+1}} }.
  \end{equation}
\end{corollary}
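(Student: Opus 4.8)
\noindent\emph{Proof proposal.} The plan is to instantiate \ref{generaltheorem} with Kernel Ridge Regression as the base learner: plug the KRR-specific expressions for $\VS$ and $\B$ into its remainder bound and simplify, using the choices of $\rho$ and of the admissible range of $P$ from \ref{consistency} (namely \eqref{rhodef} and \eqref{ppp}). First I would record that, under \ref{polyeigen}, \ref{eigenbound}, \ref{rass} and \ref{subgaussass}, KRR meets \ref{varianceass} with $\Vn = \C\sqrt{\trace/(n\rho^{1/b})} \le \C\rho^{-1/b} n^{-1/2}$ as in \eqref{krrvbar}, and \ref{biasass} with $\B \le \rho^{r-1/(2b)}\norm{h}$. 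Since divide-and-conquer runs KRR on partitions of size $S = N/P$, the quantities entering \ref{generaltheorem} are $\VS \le \C\rho^{-1/b}S^{-1/2}$ and this same $\B$. Substituting $\rho \asymp N^{-b/(2br'+1)}$, so that $\rho^{-1/b}\asymp N^{1/(2br'+1)}$, gives
\begin{equation}
  \VS \le \C N^{\frac{1}{2br'+1}}\sqrt{P/N}, \qquad \B \le \C N^{-\frac{br-1/2}{2br'+1}}\norm{h}.
\end{equation}
I would also observe that $\infnorm{\fstar}$ is a finite constant not depending on $N,P,T$: writing $\fstar = W_k^r h$ and using $\sup_j\infnorm{\phi_j}\le\phiinf$, $\mu_j\asymp j^{-b}$ and $r>1/2>1/(2b)$, Cauchy--Schwarz yields $\infnorm{\fstar}\le\phiinf\norm{h}(\sum_j\mu_j^{2r})^{1/2}<\infty$, so it is swept into the $\tilde{O}$.

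Next I would check that the smallness condition \eqref{plowerboundth} of \ref{generaltheorem} holds in the regime under consideration (this needs $P$ large relative to $\varb$ up to logarithms, compatible with \eqref{ppp} and with $N,P\to\infty$), and then substitute into the remainder bound. Discarding poly-logarithmic factors in $P$ and $T$ and absorbing the constant $\infnorm{\fstar}$, the bound reduces to $\DP = \tilde{O}\brac{\varb P^{-1/6} + \sqrt{P}\,\B/\s}$. Writing $1/\s = \varb/\VS$, a one-line exponent computation gives $\sqrt{P}\,\B/\VS \asymp N^{1/2}\cdot N^{-\frac{br-1/2}{2br'+1}}\cdot N^{-\frac{1}{2br'+1}}\norm{h} = N^{-\frac{b(r-r')}{2br'+1}}\norm{h}$, hence
\begin{equation}
  \DP = \tilde{O}\brac{\varb\brac{P^{-1/6} + N^{-\frac{b(r-r')}{2br'+1}}}}.
\end{equation}

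To obtain \eqref{pppp} it then remains to bound $\varb = \VS/\s$: I would use the lower bound $\s \ge \c S^{-1/2}$ --- the mildest quantitative instance of \ref{lowersass}, which is available for local KRR --- so that, together with $\VS \le \C\rho^{-1/b}S^{-1/2}$, one gets $\varb \le \C\rho^{-1/b} = \C N^{1/(2br'+1)}$, i.e. \eqref{pppp}. Under the stronger hypothesis $\varb\asymp 1$ the prefactor disappears and \eqref{ppppp} follows at once.

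\textbf{The main obstacle.} The substitution itself is pure bookkeeping; the genuinely non-routine point is the control of $\varb = \VS/\s$, which requires a quantitative \emph{lower} bound on the per-point prediction variance $\Var{\fhatm(\tXt)}$ matching the variance proxy $\VS$ up to the $\rho^{-1/b}$ factor. Such a bound must hold uniformly over the deterministic prediction set $\cbrac{\tXt}$ and is sensitive to the interplay between the kernel and the locations of the $\tXt$'s (for instance, prediction points far from the design have small variance), which is precisely why the sharp rate \eqref{ppppp} is stated under the direct assumption $\varb\asymp 1$ rather than derived unconditionally.
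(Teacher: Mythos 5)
Your proposal is correct and reproduces exactly what the paper intends --- the paper offers no explicit proof beyond the phrase ``straightforward algebra,'' and your substitution of $\VS \lesssim \rho^{-1/b}S^{-1/2}$ and $\B \lesssim \rho^{r-1/(2b)}$ into \ref{generaltheorem} with $\rho$ as in \eqref{rhodef}, together with the exponent computation $\sqrt{P}\,\B/\VS \asymp N^{-b(r-r')/(2br'+1)}$, is precisely that algebra. Your treatment of $\infnorm{\fstar}$ as a kernel-dependent constant (via $\fstar = W_k^r h$ and Cauchy--Schwarz) and the absorption of $\infnorm{\fstar}^{2/3}$ into $\varb$ using $\varb \ge \c$ are also sound.

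Your flag on the control of $\varb$ is well taken and correctly diagnosed. The unconditional display \eqref{pppp} does not follow from \ref{lowersass} alone: that assumption only posits some positive $\s$, whereas both the first term $\varb P^{-1/6}$ and the second term $\sqrt{P}\,\B/\s$ need $\s \ge \c\, S^{-1/2}$ (equivalently $\varb \lesssim \rho^{-1/b}$) for the stated $N^{1/(2br'+1)}$ prefactor to emerge. So \eqref{pppp} is really a statement about the natural KRR scale of $\s$, which the paper leaves implicit; under the stronger $\varb \asymp 1$ the issue disappears and \eqref{ppppp} is immediate, as you note. One small bookkeeping item you hint at but could make explicit: \ref{generaltheorem} also requires \eqref{plowerboundth}, which together with \eqref{ppp} confines $P$ to the window $N^{1/(2br'+1)} \lesssim P \lesssim N^{(2br'-1)/(2br'+1)}$ (up to logs) when $\varb \asymp N^{1/(2br'+1)}$; this window is nonempty precisely when $r' > 1/b$, which is implied by the paper's feasibility constraint $r' > 1/2 + 1/(2b)$ and $b>1$.
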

Clearly, the larger the gap between $r'$ and $r$, the further we stray from the optimal bias-variance trade-off thus reducing the accuracy of the point estimator. 
Under $\varb \asymp 1$, which is the case, for instance for a uniform measure $\pi$ and Gaussian noise $\varepsilon_i$, the gap may be arbitrarily small and still result in decay of the remainder term.
At the same time, if we abandon such a non-realistic assumption, as \eqref{pppp} dictates, the necessary discrepancy still becomes marginal for highly smooth functions and kernels ($b \gg 1$) as only $r' < r-1/b$ is required for the decay.

A question necessary to consider is whether a choice of $P$ and $r'$, both satisfying \eqref{ppp} and providing decay of the remainder term \eqref{pppp} exists, so we can have consistency and bootstrap validity at the same time.
Clearly, these requirements do not come into contradiction if $r' > 1/2+1/(2b)$ and such a choice is feasible if $r > 1/2 + 3/(2b)$. 
Again, this is only marginally more restrictive than the classical $\fstar \in \H$ for smooth functions ($b \gg 1$).
We have so far ignored how much accuracy of a point-estimate \emph{should} a practitioner sacrifice. 
In the light of \eqref{ppp}, the smallest reasonable choice of $r'$ is
\begin{equation}
  r' = \frac{3}{4} r + \frac{1}{8b}
\end{equation}
as for smaller $r'$ the first summands in \eqref{pppp} and \eqref{ppppp} dominate.
Substituting $r'$ to \eqref{infbound} and \eqref{l2cons} proves the choice to lead to an estimator exhibiting only marginally slower convergence in comparison to the choice $r'=r$ for large $b$.

\section{Experiment}
In this section we apply the method to the simulated data and estimate the actual probability for the yielded bands to cover $\ftrueT$.
We sample the covariates uniformly $X_i \sim U[0,1]$, $\tXt \sim U[0,1]$, the noise is Gaussian given the covariate, namely 
\begin{equation}
  \varepsilon_i \sim \Normal{0}{ e^{4 \abs{X_i - 0.5}} }
\end{equation}
and the true regression function $\fstar(x) = \sin(2 \cdot 3.14 x)$. The sample size $N$ is chosen to be $2^{16}$, the number of partitions $P \in \cbrac{2^6, 2^7, ..., 2^{12}}$ and the prediction set size $T \in \cbrac{2, 2^2, ..., 2^9}$. 
We choose the kernel $\kdd$ to be~\matern~with smoothness index $\nu=7/2$ (meaning $b=8$). 
The penalization parameter $\rho$ is chosen in accordance with \eqref{rhodef} with $r'=1/2$.
The experiment is repeated $2000$ times for each configuration and the estimated coverage probabilities are reported in Figure
\ref{fig1}. 
The nominal level is chosen $\alpha = 0.95$. We see that the probability for the method to yield the bounds which actually cover the true values of $\fstar$ does not depend on the size $T$ of the prediction set. 
Moreover, it meets the nominal level for $P > 128$ (in terms of 99\% confidence intervals reported in Table \ref{table} in Section \ref{tablesec}). 
Both of the findings are in line with the theoretical bootstrap validity result. 
As for the accuracy of the point-estimator $\fbar$, it declines only for $P\ge 2^{11}$ as such large numbers of partitions come into a contradiction with assumption \eqref{ppp}. 
Thus, for a wide range of $P$ and any practical size of the prediction set we have a proper coverage probability without a significant sacrifice in accuracy.

The code is available on GitHub.\footnote{\url{https://github.com/akopich/nonparregboot}}

\begin{figure}[h]
\centering\includegraphics[width=15cm]{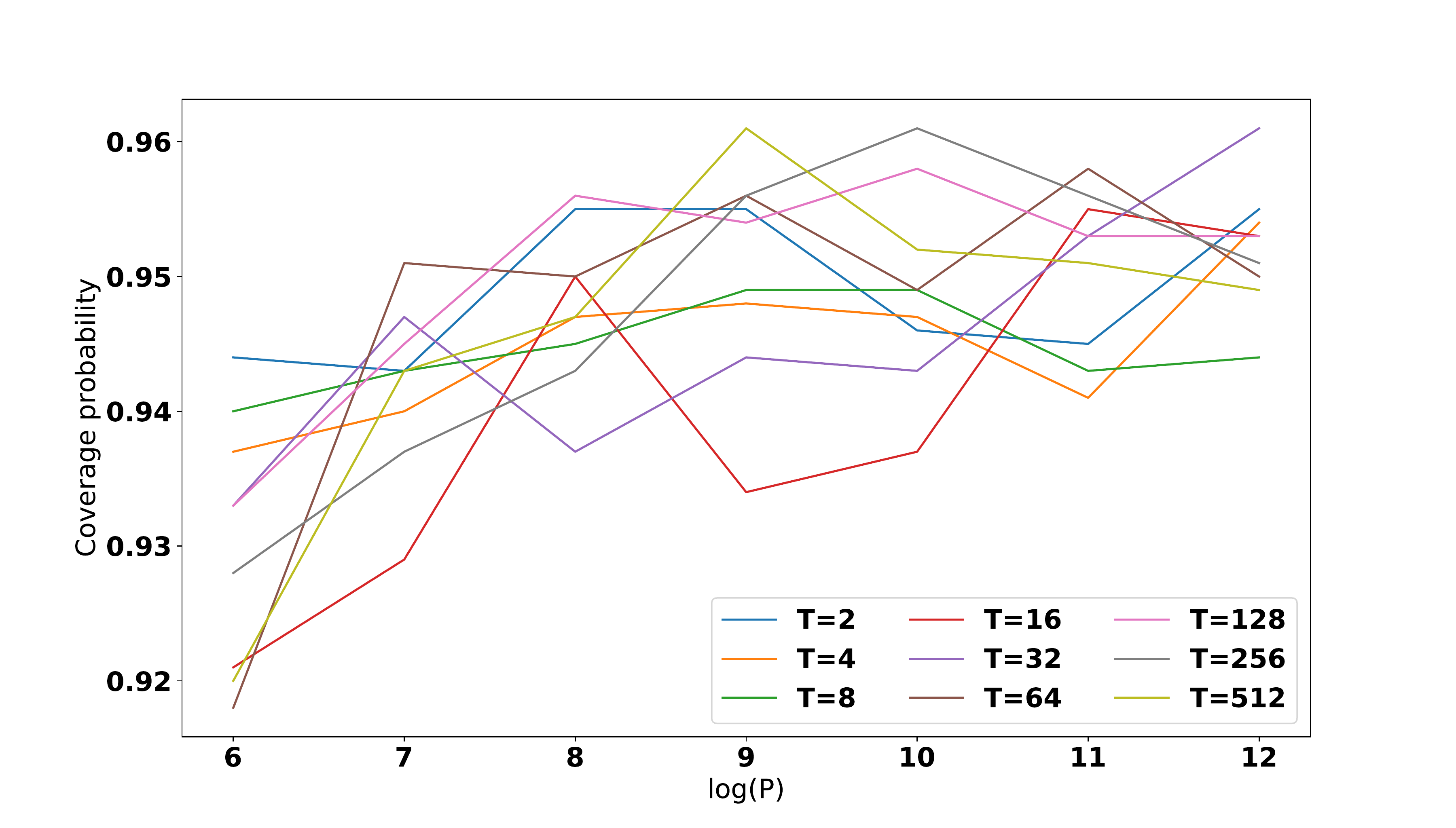}
\caption{Coverage probability in dependence on $P$ and $T$. The horizontal axis uses log-2 scale. The nominal coverage probability $\alpha=0.95$. Each point is averaged over 1000 trials.}\label{fig1}
\end{figure}

\bibliography{main}

\begin{thebibliography}{38}
\providecommand{\natexlab}[1]{#1}
\providecommand{\url}[1]{\texttt{#1}}
\expandafter\ifx\csname urlstyle\endcsname\relax
  \providecommand{\doi}[1]{doi: #1}\else
  \providecommand{\doi}{doi: \begingroup \urlstyle{rm}\Url}\fi

\bibitem[Avanesov(2020)]{avanesov2020}
V.~Avanesov.
\newblock Data-driven confidence bands for distributed nonparametric
  regression.
\newblock In \emph{Proceedings of Thirty Third Conference on Learning Theory},
  volume 125 of \emph{Proceedings of Machine Learning Research}, pages
  300--322. PMLR, 09--12 Jul 2020.
\newblock URL \url{http://proceedings.mlr.press/v125/avanesov20a.html}.

\bibitem[Bach(2013)]{bach2013sharp}
F.~Bach.
\newblock Sharp analysis of low-rank kernel matrix approximations.
\newblock In \emph{Conference on Learning Theory}, pages 185--209, 2013.

\bibitem[Bentkus(2005)]{doi:10.1137/S0040585X97981123}
V.~Bentkus.
\newblock A lyapunov-type bound in $\mathbb{R}^d$.
\newblock \emph{Theory of Probability \& Its Applications}, 49\penalty0
  (2):\penalty0 311--323, 2005.
\newblock \doi{10.1137/S0040585X97981123}.
\newblock URL \url{https://doi.org/10.1137/S0040585X97981123}.

\bibitem[Bhattacharya and Pati(2017)]{Bhattacharya2017}
A.~Bhattacharya and D.~Pati.
\newblock {Posterior contraction in Gaussian process regression using
  Wasserstein approximations}.
\newblock \emph{Information and Inference}, 6\penalty0 (4):\penalty0 416--440,
  2017.
\newblock ISSN 20498772.
\newblock \doi{10.1093/imaiai/iax003}.

\bibitem[Bilionis and Zabaras(2012)]{bilionis2012multi}
I.~Bilionis and N.~Zabaras.
\newblock Multi-output local gaussian process regression: Applications to
  uncertainty quantification.
\newblock \emph{Journal of Computational Physics}, 231\penalty0 (17):\penalty0
  5718--5746, 2012.

\bibitem[Caponnetto and {De Vito}(2007)]{Caponnetto2007}
A.~Caponnetto and E.~{De Vito}.
\newblock {Optimal Rates for the Regularized Least-Squares Algorithm}.
\newblock \emph{Foundations of Computational Mathematics}, 7\penalty0
  (3):\penalty0 331--368, 2007.
\newblock ISSN 1615-3383.
\newblock \doi{10.1007/s10208-006-0196-8}.
\newblock URL \url{https://doi.org/10.1007/s10208-006-0196-8}.

\bibitem[Chen et~al.(2013)Chen, Chan, and Cheng]{chen2013gaussian}
J.~Chen, L.~L.~T. Chan, and Y.-C. Cheng.
\newblock Gaussian process regression based optimal design of combustion
  systems using flame images.
\newblock \emph{Applied energy}, 111:\penalty0 153--160, 2013.

\bibitem[Cheng and Chen(2019)]{Cheng2019}
G.~Cheng and Y.~C. Chen.
\newblock {Nonparametric inference via bootstrapping the debiased estimator}.
\newblock \emph{Electronic Journal of Statistics}, 13\penalty0 (1):\penalty0
  2194--2256, 2019.
\newblock ISSN 19357524.
\newblock \doi{10.1214/19-EJS1575}.

\bibitem[Chernozhukov et~al.(2013)Chernozhukov, Chetverikov, and
  Kato]{Chernozhukov2013}
V.~Chernozhukov, D.~Chetverikov, and K.~Kato.
\newblock {Gaussian approximations and multiplier bootstrap for maxima of sums
  of high-dimensional random vectors}.
\newblock \emph{Annals of Statistics}, 41\penalty0 (6):\penalty0 2786--2819,
  2013.
\newblock ISSN 00905364.
\newblock \doi{10.1214/13-AOS1161}.

\bibitem[{Chernozhukov} et~al.(2013){Chernozhukov}, Chetverikov, and
  Kato]{comparison}
V.~{Chernozhukov}, D.~Chetverikov, and K.~Kato.
\newblock {Comparison and anti-concentration bounds for maxima of Gaussian
  random vectors}.
\newblock \emph{Probability Theory and Related Fields}, 162, 2013.
\newblock \doi{10.1007/s00440-014-0565-9}.

\bibitem[Chernozhukov et~al.(2017)Chernozhukov, Chetverikov, and
  Kato]{Chernozhukov2017}
V.~Chernozhukov, D.~Chetverikov, and K.~Kato.
\newblock {Detailed proof of Nazarov's inequality}.
\newblock pages 1--4, 2017.
\newblock URL \url{http://arxiv.org/abs/1711.10696}.

\bibitem[Deisenroth and Ng(2015)]{Deisenroth2015}
M.~P. Deisenroth and J.~W. Ng.
\newblock {Distributed Gaussian processes}.
\newblock \emph{32nd International Conference on Machine Learning, ICML 2015},
  2:\penalty0 1481--1490, 2015.

\bibitem[Fine and Scheinberg(2001)]{fine2001efficient}
S.~Fine and K.~Scheinberg.
\newblock Efficient svm training using low-rank kernel representations.
\newblock \emph{Journal of Machine Learning Research}, 2\penalty0
  (Dec):\penalty0 243--264, 2001.

\bibitem[Hall and Horowitz(2013)]{Hall2013}
P.~Hall and J.~Horowitz.
\newblock {A simple bootstrap method for constructing nonparametric confidence
  bands for functions}.
\newblock \emph{Annals of Statistics}, 41\penalty0 (4):\penalty0 1892--1921,
  2013.
\newblock ISSN 00905364.
\newblock \doi{10.1214/13-AOS1137}.

\bibitem[He and Siu(2011)]{he2011single}
H.~He and W.-C. Siu.
\newblock Single image super-resolution using gaussian process regression.
\newblock In \emph{CVPR 2011}, pages 449--456. IEEE, 2011.

\bibitem[Knapik et~al.(2011)Knapik, Van Der~Vaart, van Zanten,
  et~al.]{knapik2011bayesian}
B.~T. Knapik, A.~W. Van Der~Vaart, J.~H. van Zanten, et~al.
\newblock Bayesian inverse problems with gaussian priors.
\newblock \emph{The Annals of Statistics}, 39\penalty0 (5):\penalty0
  2626--2657, 2011.

\bibitem[Koltchinskii et~al.(2006)]{koltchinskii2006local}
V.~Koltchinskii et~al.
\newblock Local rademacher complexities and oracle inequalities in risk
  minimization.
\newblock \emph{The Annals of Statistics}, 34\penalty0 (6):\penalty0
  2593--2656, 2006.

\bibitem[Lin et~al.(2017)Lin, Guo, and Zhou]{Lin2017}
S.~B. Lin, X.~Guo, and D.~X. Zhou.
\newblock {Distributed learning with regularized least squares}.
\newblock \emph{Journal of Machine Learning Research}, 18:\penalty0 1--31,
  2017.
\newblock ISSN 15337928.

\bibitem[Mendelson and Neeman(2010)]{Mendelson2010}
S.~Mendelson and J.~Neeman.
\newblock {Regularization in kernel learning}.
\newblock \emph{Annals of Statistics}, 38\penalty0 (1):\penalty0 526--565,
  2010.
\newblock ISSN 00905364.
\newblock \doi{10.1214/09-AOS728}.

\bibitem[M{\"{u}}cke and Blanchard(2018)]{Mucke2018}
N.~M{\"{u}}cke and G.~Blanchard.
\newblock {Parallelizing spectrally regularized kernel algorithms}.
\newblock \emph{Journal of Machine Learning Research}, 19:\penalty0 1--29,
  2018.
\newblock ISSN 15337928.

\bibitem[Nazarov(2003)]{nazarov}
F.~Nazarov.
\newblock \emph{On the Maximal Perimeter of a Convex Set in $\mathbb{R}^n$ with
  Respect to a Gaussian Measure}, pages 169--187.
\newblock Springer Berlin Heidelberg, 2003.
\newblock ISBN 978-3-540-36428-3.
\newblock \doi{10.1007/978-3-540-36428-3_15}.
\newblock URL \url{https://doi.org/10.1007/978-3-540-36428-3_15}.

\bibitem[Rasmussen and Williams(2006)]{RW}
C.~Rasmussen and C.~Williams.
\newblock \emph{Gaussian Processes for Machine Learning}.
\newblock Adaptive Computation and Machine Learning. MIT Press, Cambridge, MA,
  USA, Jan. 2006.

\bibitem[Ray et~al.(2017)]{ray2017adaptive}
K.~Ray et~al.
\newblock Adaptive bernstein--von mises theorems in gaussian white noise.
\newblock \emph{The Annals of Statistics}, 45\penalty0 (6):\penalty0
  2511--2536, 2017.

\bibitem[Rosenblatt and Nadler(2016)]{Rosenblatt2016}
J.~D. Rosenblatt and B.~Nadler.
\newblock {On the optimality of averaging in distributed statistical learning}.
\newblock \emph{Information and Inference}, 5\penalty0 (4):\penalty0 379--404,
  2016.
\newblock ISSN 20498772.
\newblock \doi{10.1093/imaiai/iaw013}.

\bibitem[Rudelson and Vershynin(2013)]{rudelson2013}
M.~Rudelson and R.~Vershynin.
\newblock Hanson-wright inequality and sub-gaussian concentration.
\newblock \emph{Electron. Commun. Probab.}, 18:\penalty0 9 pp., 2013.
\newblock \doi{10.1214/ECP.v18-2865}.
\newblock URL \url{https://doi.org/10.1214/ECP.v18-2865}.

\bibitem[Rudi et~al.(2015)Rudi, Camoriano, and Rosasco]{Rudi2015}
A.~Rudi, R.~Camoriano, and L.~Rosasco.
\newblock {Less is more: Nystr{\"{o}}m computational regularization}.
\newblock \emph{Advances in Neural Information Processing Systems},
  2015-January:\penalty0 1657--1665, 2015.
\newblock ISSN 10495258.

\bibitem[Sch{\"o}lkopf et~al.(1998)Sch{\"o}lkopf, Smola, and
  M{\"u}ller]{scholkopf1998nonlinear}
B.~Sch{\"o}lkopf, A.~Smola, and K.-R. M{\"u}ller.
\newblock Nonlinear component analysis as a kernel eigenvalue problem.
\newblock \emph{Neural computation}, 10\penalty0 (5):\penalty0 1299--1319,
  1998.

\bibitem[Smale and Zhou(2004)]{smale04}
S.~Smale and D.-X. Zhou.
\newblock Shannon sampling ii: Connections to learning theory.
\newblock \emph{Applied and Computational Harmonic Analysis - APPL COMPUT
  HARMONIC ANAL}, 19, 08 2004.
\newblock \doi{10.1016/j.acha.2005.03.001}.

\bibitem[Spokoiny and Panov(2019)]{Spokoiny2019}
V.~Spokoiny and M.~Panov.
\newblock {Accuracy of Gaussian approximation in nonparametric Bernstein -- von
  Mises Theorem}.
\newblock 2019.
\newblock URL \url{http://arxiv.org/abs/1910.06028}.

\bibitem[Szab{\'o} et~al.(2015)Szab{\'o}, Van Der~Vaart, van Zanten,
  et~al.]{szabo2015frequentist}
B.~Szab{\'o}, A.~W. Van Der~Vaart, J.~van Zanten, et~al.
\newblock Frequentist coverage of adaptive nonparametric bayesian credible
  sets.
\newblock \emph{The Annals of Statistics}, 43\penalty0 (4):\penalty0
  1391--1428, 2015.

\bibitem[van~de Geer(2006)]{van2006empirical}
van~de Geer.
\newblock \emph{Empirical Processes in M-estimation}.
\newblock Cambridge UP, 2006.

\bibitem[van~der Vaart and van Zanten(2011)]{vaart2011information}
A.~van~der Vaart and H.~van Zanten.
\newblock Information rates of nonparametric gaussian process methods.
\newblock \emph{Journal of Machine Learning Research}, 12\penalty0
  (Jun):\penalty0 2095--2119, 2011.

\bibitem[van~der Vaart et~al.(2009)van~der Vaart, van Zanten,
  et~al.]{van2009adaptive}
A.~W. van~der Vaart, J.~H. van Zanten, et~al.
\newblock Adaptive bayesian estimation using a gaussian random field with
  inverse gamma bandwidth.
\newblock \emph{The Annals of Statistics}, 37\penalty0 (5B):\penalty0
  2655--2675, 2009.

\bibitem[Williams and Seeger(2001)]{williams2001using}
C.~K. Williams and M.~Seeger.
\newblock Using the nystr{\"o}m method to speed up kernel machines.
\newblock In \emph{Advances in neural information processing systems}, pages
  682--688, 2001.

\bibitem[Yang and Pati(2017)]{Yang2017a}
Y.~Yang and D.~Pati.
\newblock {Bayesian model selection consistency and oracle inequality with
  intractable marginal likelihood}.
\newblock pages 1--38, 2017.
\newblock URL \url{http://arxiv.org/abs/1701.00311}.

\bibitem[Yang et~al.(2017)Yang, Bhattacharya, and Pati]{Yang2017}
Y.~Yang, A.~Bhattacharya, and D.~Pati.
\newblock {Frequentist coverage and sup-norm convergence rate in Gaussian
  process regression}.
\newblock pages 1--43, 2017.
\newblock URL \url{http://arxiv.org/abs/1708.04753}.

\bibitem[Zhang(2005)]{zhang2005learning}
T.~Zhang.
\newblock Learning bounds for kernel regression using effective data
  dimensionality.
\newblock \emph{Neural Computation}, 17\penalty0 (9):\penalty0 2077--2098,
  2005.

\bibitem[Zhang et~al.(2015)Zhang, Duchi, and Wainwright]{Zhang2015}
Y.~Zhang, J.~Duchi, and M.~Wainwright.
\newblock {Divide and conquer kernel ridge regression: A distributed algorithm
  with minimax optimal rates}.
\newblock \emph{Journal of Machine Learning Research}, 16:\penalty0 3299--3340,
  2015.
\newblock ISSN 15337928.

\end{thebibliography}

\appendix
\section{Confidence intervals for the simulation study}\label{tablesec}

\begin{table}[H]
\begin{tabular}{l|lllllll}
   & $P=2^6$ & $P=2^7$ & $P=2^8$ & $P=2^9$ & $P=2^{10}$ & $P=2^{11}$ & $P=2^{12}$ \\ \hline
   $T=2^1$ & $(.92,.96)$ & $(.92,.96)$ & $(.94,.97)$ & $(.94,.97)$ & $(.92,.96)$ & $(.92,.96)$ & $(.94,.97)$ \\ 
   $T=2^2$ & $(.91,.95)$ & $(.92,.96)$ & $(.93,.96)$ & $(.93,.96)$ & $(.93,.96)$ & $(.92,.96)$ & $(.93,.97)$ \\ 
   $T=2^3$ & $(.92,.96)$ & $(.92,.96)$ & $(.92,.96)$ & $(.93,.96)$ & $(.93,.96)$ & $(.92,.96)$ & $(.92,.96)$ \\ 
   $T=2^4$ & $\boldsymbol{(.90,.94)}$ & $\boldsymbol{(.91,.95)}$ & $(.93,.97)$ & $(.91,.95)$ & $(.91,.95)$ & $(.94,.97)$ & $(.93,.97)$ \\ 
   $T=2^5$ & $(.91,.95)$ & $(.93,.96)$ & $(.91,.95)$ & $(.92,.96)$ & $(.92,.96)$ & $(.93,.97)$ & $(.94,.97)$ \\ 
   $T=2^6$ & $\boldsymbol{(.89,.94)}$ & $(.93,.97)$ & $(.93,.97)$ & $(.94,.97)$ & $(.93,.96)$ & $(.94,.97)$ & $(.93,.97)$ \\ 
   $T=2^7$ & $(.91,.95)$ & $(.92,.96)$ & $(.94,.97)$ & $(.93,.97)$ & $(.94,.97)$ & $(.93,.97)$ & $(.93,.97)$ \\ 
   $T=2^8$ & $\boldsymbol{(.90,.95)}$ & $(.91,.95)$ & $(.92,.96)$ & $(.94,.97)$ & $(.94,.97)$ & $(.94,.97)$ & $(.93,.97)$ \\ 
   $T=2^9$ & $\boldsymbol{(.90,.94)}$ & $(.92,.96)$ & $(.93,.96)$ & $(.94,.97)$ & $(.93,.97)$ & $(.93,.97)$ & $(.93,.96)$ \\ 
\end{tabular}
\caption{99\%-confidence intervals for the coverage probabilities obtained in the simulation study. The intervals not including the nominal $\alpha=0.95$ are shown in bold.}\label{table}
\end{table}

\section{Proof of Theorem 1}\label{secgenproof}
Throughout the section we impose \ref{varianceass}, \ref{biasass} and \ref{lowersass}. 
The first lemma implements \eqref{firststep}.
\begin{lemma}\label{gar1}
  \begin{equation}
    \sup_{l,u\in \R^T}\abs{\Prob{l < \brac{\fbarT-\ftruerhoT} \sqrt{P} < u} - \Prob{l < \gamma < u}} \le R_G,
  \end{equation}
  where 
  \begin{equation}
    R_G \coloneqq \C  \brac{\frac{{ \varb^{6}} \log^7(TP)}{P}}^{1/6} 
  \end{equation}
  and
  \begin{equation}
    \gamma \sim \Normal{0}{\Var{\fhatmT}}.
  \end{equation}
\end{lemma}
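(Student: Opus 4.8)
The plan is to apply the high-dimensional CLT of \cite{Chernozhukov2017} to the sum of i.i.d.\ random vectors $\fhatmT - \ftruerhoT$, $p \in [P]$, normalized by $\sqrt P$. Recall $\fbarT = P^{-1}\sum_{p} \fhatmT$, so $\sqrt P(\fbarT - \ftruerhoT) = P^{-1/2}\sum_{p}(\fhatmT - \ftruerhoT)$ is a normalized sum of $P$ i.i.d.\ centered vectors in $\R^T$, each with covariance $\Var{\fhatmT}$. The CLT of \cite{Chernozhukov2017} controls $\sup_{l,u}\abs{\Prob{l < \sqrt P(\fbarT - \ftruerhoT) < u} - \Prob{l < \gamma < u}}$ over hyperrectangles, with a bound of the form $\C\,(B_n^2 \log^7(TP)/P)^{1/6}$, where $B_n$ is a constant capturing moments of the summands relative to the entrywise variances, and provided the entrywise variances are bounded below.

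First I would identify the relevant moment quantities the CLT requires: a lower bound on $\min_t \Var{\fhatmT_t}$ (supplied by \ref{lowersass}, which gives $\Var{\fhatmT_t} \ge \s^2$), together with upper bounds of the type $\max_t \E{\abs{\fhatmT_t - \ftruerhoT_t}^3}$ and a sub-exponential-norm control $\E{\exp(\abs{\fhatmT_t - \ftruerhoT_t}/D_n)} \le 2$. Both of the latter follow from \ref{varianceass}: the stated exponential tail bound $\Prob{\sup_t \abs{(\fhatm - \ftruerho)(\tXt)} \ge (1+\x)\VS} \le e^{-\x}$ says the centered local prediction vector has sub-exponential entries with a norm of order $\VS$, hence its third absolute moments are of order $\VS^3$ (up to a logarithmic-free constant, after integrating the tail). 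Dividing through by $\s$ turns all these moment ratios into powers of $\varb = \VS/\s$, which is exactly the quantity appearing in $R_G$; the $\log^7(TP)$ factor is the characteristic price of the entrywise/hyperrectangle CLT, and taking the sixth root yields $R_G = \C(\varb^6 \log^7(TP)/P)^{1/6}$.

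The main obstacle is matching the precise hypotheses of \cite{Chernozhukov2017} — most versions of that theorem are stated either for sub-exponential summands with explicit constants $(D_n, B_n)$ or under a uniform-boundedness/moment condition, and one must verify that \ref{varianceass} supplies exactly the form needed (in particular extracting a clean sub-exponential Orlicz-type bound for the individual coordinates $\fhatmT_t - \ftruerhoT_t$ from the given supremum tail bound, which is immediate since the supremum dominates each coordinate). One also must confirm the CLT's dimension-growth side condition, which here reads $\varb^2 \log^7(TP) \lesssim P$ up to constants — weaker than the hypothesis \eqref{plowerboundth} of \ref{generaltheorem}, so it is automatically in force whenever the theorem is being invoked (and in any case, when the side condition fails the bound $R_G \le \C$ is vacuously true since $\DP \le 1$ always). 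Everything else is routine: the summands are i.i.d.\ because the partitions $S_p$ are disjoint and the data i.i.d., they are centered by the definition $\ftruerhoT = \E{\fhatmT}$, and $\gamma$ is defined to have precisely their common covariance, so no Gaussian-comparison step is needed at this stage.
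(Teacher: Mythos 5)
Your proposal matches the paper's own proof essentially step for step: both normalize the local estimates by $\s$ so the entrywise variances are bounded below by a universal constant, derive the required moment and sub-exponential bounds from the exponential tail in \ref{varianceass} (which automatically controls each coordinate since it controls the supremum), and then invoke the hyperrectangle CLT of \cite{Chernozhukov2017} (restated as \ref{cltlemma}), picking up the $\varb^6$ precisely because the CLT requires $B_n \gtrsim \varb^3$ to dominate the third and fourth moments. Your observations about the harmless nature of the side condition and the absence of any Gaussian-comparison step are also consistent with the paper's treatment.
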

\begin{proof}
  Denote
  \begin{equation}
    Z_p \coloneqq \s^{-1}\brac{\fhatmT - \ftruerhoT}.
  \end{equation}
  Now, clearly, $\forall p \in [P]$ and $\forall t\in[T]$
  \begin{equation}
    \Var{\Zpt} \ge 1.
  \end{equation}
  Moreover, by \ref{varianceass} for all positive $\x$
  \begin{equation}
    \Prob{\abs{\Zpt} > \brac{1+\x}\varb } \le \exp(-\x).
  \end{equation}
  Now, using the integrated tail probability expectation formula, we have
  \begin{equation}
    \E{\abs{\Zpt}^3} \le \C \varb^{3},
  \end{equation}
  \begin{equation}
    \E{\abs{\Zpt}^4} \le \C \varb^{4}
  \end{equation}
  and
  \begin{equation}
    \E{\exp(\Zpt / \varb)} \le \C.
  \end{equation}
  Finally, we apply CLT (\ref{cltlemma}) which yields
  \begin{equation}
    \sup_{l,u\in \R^T}\abs{\Prob{l < P^{-1/2}\sump Z_p < u} - \Prob{l < \gamma < u}} \le R_G  ,
  \end{equation}
  where 
  $\gamma \sim \Normal{0}{\s^{-2}\Var{\fhatmT}}$.
\end{proof}
Now we demonstrate \eqref{app1}.
\begin{lemma}\label{gar}
  \begin{equation}
    \sup_{l,u\in \R^T}\abs{\Prob{l < \brac{\fbarT -\ftrueT}\sqrt{P} < u} - \Prob{l < \gamma < u}} \le R_G + \C \s^{-1}\sqrt{P}\B\brac{\sqrt{\log T}+1},
  \end{equation}
  where 
  \begin{equation}
    \gamma \sim \Normal{0}{\Var{\fhatmT}}.
  \end{equation}
\end{lemma}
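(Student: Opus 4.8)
The plan is to reduce the claim to Lemma \ref{gar1} by absorbing the bias into a deterministic translation of the rectangle, and then to control the effect of that translation via Gaussian anti-concentration. Concretely, I would first use the splitting $\fbarT - \ftrueT = \brac{\fbarT - \ftruerhoT} + \brac{\ftruerhoT - \ftrueT}$ and introduce the deterministic vector $\Delta \coloneqq \sqrt{P}\brac{\ftruerhoT - \ftrueT}$. Since the $t$-th coordinate of $\ftruerhoT - \ftrueT$ equals $\brac{\ftruerho - \fstar}(\tXt)$, Assumption \ref{biasass} gives $\infnorm{\Delta} \le \sqrt{P}\,\B$.

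For arbitrary $l,u\in\R^T$ one then has the exact identity
\[
  \Prob{l < \brac{\fbarT - \ftrueT}\sqrt{P} < u} = \Prob{l - \Delta < \brac{\fbarT - \ftruerhoT}\sqrt{P} < u - \Delta},
\]
and since the bound of Lemma \ref{gar1} is uniform over all rectangles — in particular over $[l-\Delta, u-\Delta]$ — the right-hand side differs from $\Prob{l - \Delta < \gamma < u - \Delta}$ by at most $R_G$, where $\gamma \sim \Normal{0}{\Var{\fhatmT}}$. It then remains to compare $\Prob{l - \Delta < \gamma < u - \Delta}$ with $\Prob{l < \gamma < u}$. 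The vector $\gamma$ is centered with coordinate variances $\Var{\gamma_t} = \Var{\fhatm(\tXt)} \ge \s^2$ by Assumption \ref{lowersass}, and the translated box $[l-\Delta, u-\Delta]$ is sandwiched between the inner and the outer $\infnorm{\Delta}$-parallels (in sup-norm) of $[l,u]$; hence Nazarov's anti-concentration inequality \citep{nazarov} bounds this difference by $\C\,\s^{-1}\infnorm{\Delta}\brac{\sqrt{\log T}+1} \le \C\,\s^{-1}\sqrt{P}\,\B\brac{\sqrt{\log T}+1}$. A triangle inequality and a supremum over $l,u\in\R^T$ then yield the asserted bound $R_G + \C\,\s^{-1}\sqrt{P}\,\B\brac{\sqrt{\log T}+1}$.

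The only delicate point — and thus the main obstacle — is this last step: one must obtain the $\sqrt{\log T}$ dependence rather than a crude factor of $T$, which is precisely what Nazarov's inequality delivers for Gaussian probabilities of hyperrectangles under small translations (a naive union bound over the $2T$ faces of the box would be far too lossy), and it is also where the lower bound on $\Var{\fhatm(\tXt)}$ from Assumption \ref{lowersass} enters. Everything else is an elementary rewriting.
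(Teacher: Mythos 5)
Your argument is correct and follows essentially the same route as the paper's own proof: decompose $\fbarT - \ftrueT$ as $(\fbarT - \ftruerhoT) + (\ftruerhoT - \ftrueT)$, absorb the deterministic bias vector into a shift of the rectangle so that the uniform bound of Lemma \ref{gar1} applies verbatim, and then control the effect of the shift on the Gaussian probability via Nazarov's anti-concentration inequality (using the lower variance bound $\s$ from Assumption \ref{lowersass}), closing with the triangle inequality. The only difference is cosmetic: you fold the factor $\sqrt{P}$ into the shift vector $\Delta$, whereas the paper keeps the unscaled $a = \ftruerhoT - \ftrueT$ and carries $\sqrt{P}$ separately.
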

\begin{proof}
  Re-writing the claim of \ref{gar1} yields for an arbitrary deterministic $a \in \R^T$
  \begin{equation}
    \sup_{l,u\in \R^T}\abs{\Prob{l < \brac{\fbarT - \ftruerhoT - a} \sqrt{P} < u} - \Prob{l+a\sqrt{P} < \gamma < u+a\sqrt{P}}} \le R_G.
  \end{equation}
  Next we choose $a = \ftruerhoT - \ftrueT$ and use \ref{biasass}, obtaining
  \begin{equation}
    \infnorm{a} \le \B.
  \end{equation}
  Further, we apply \ref{nazarov}, which bounds 
  \begin{equation}
    \sup_{l,u \in \R^T}\abs{\Prob{l+a\sqrt{P} < \gamma < u+a\sqrt{P}} - \Prob{l < \gamma < u}}  \le  \C \s^{-1}\sqrt{P}\B\brac{\sqrt{2\log 2T}+2}.
  \end{equation}
  The triangle inequality completes the proof.
\end{proof}
Now we establish a Gaussian approximation for the bootstrap estimator as the step \eqref{app2} suggests.
\begin{lemma}\label{garb}
  Let 
  \begin{equation}\label{plowerbound}
    \frac{\varb \log(TP) \log^{2}P }{P} \le 8^{-1}.
  \end{equation}
  Then
  \begin{equation}
    \sup_{l,u\in \R^T}\abs{\Probb{l < \brac{\fbarTb-\fbarT} \sqrt{P} < u} - \Probb{l < \gamma^\flat < u}} \le R_G^\flat + 4P^{-1},
  \end{equation}
  where 
  \begin{equation}
    R_G^\flat \coloneqq \C \brac{\frac{{ \varb^{6}}\log^7(TP)}{P}}^{1/6} \log P
  \end{equation}
  and
  \begin{equation}
    \gamma^\flat \sim \Normal{0}{\Varb{\fhatmTb}}.
  \end{equation}
\end{lemma}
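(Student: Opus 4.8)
The plan is to mimic the argument of \ref{gar1}, but now working under the bootstrap measure $\mathbb{P}^\flat$, where the randomness is the resampling of the $P$ local estimators. Conditionally on the data, the bootstrap draws $\fhatmTb$ are i.i.d.\ uniform over the (now fixed) set $\{\fhatmT\}_{p=1}^P$, so $\fbarTb$ is an average of $P$ i.i.d.\ vectors and we can again invoke the high-dimensional CLT (\ref{cltlemma}). The catch is that the CLT constants there involve the \emph{bootstrap} moments of the centered summands $Z_p^\flat \coloneqq \s^{-1}(\fhatmTb - \fbarT)$, and these moments are themselves random (they are empirical moments over the realised cloud $\{\fhatmT - \fbarT\}_p$). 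So the strategy is: (i) express the relevant bootstrap moments as empirical averages over $p$; (ii) show that, on an event of $\mathbb{P}$-probability at least $1 - 4P^{-1}$, these empirical moments are controlled by the same $\varb$-type quantities as in \ref{gar1}, up to the extra $\log P$ factor; (iii) apply \ref{cltlemma} conditionally to get the stated bound $R_G^\flat + 4P^{-1}$.

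For step (i)--(ii), first note $\Varb{Z_p^\flat}_t = \s^{-2}\, P^{-1}\sum_p (\fhatmT_t - \fbarT_t)^2$, which I need bounded below by something like $\c$ and above by $\C \varb^2$. The lower bound follows because $\fbarT$ minimises $\sum_p \|\fhatmT - a\|^2$ componentwise, so the empirical variance dominates a population-variance-type quantity, and \ref{lowersass} gives $\Var{\fhatm(\tXt)} \ge \sigma_\rho^2(S)$; a concentration argument (the local deviations $\fhatmT - \ftruerhoT$ are sub-exponential with scale $\VS$ by \ref{varianceass}) pins the empirical version near the population one with the required probability. The key quantitative input is that the maximum over $p \in [P]$ and $t \in [T]$ of $\abs{\fhatmT_t - \ftruerhoT_t}$ is, with probability $\ge 1 - 4P^{-1}$, at most $\C \VS \log(TP)$ — this is just a union bound over $TP$ events using the exponential tail of \ref{varianceass}. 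Hence on that event $\abs{Z_p^\flat}_t \le \C \varb \log(TP)$ deterministically (after conditioning), so the bootstrap analogues of $\E{\abs{Z_p}^3}$, $\E{\abs{Z_p}^4}$ and the Orlicz-type bound $\Eb{\exp(Z_p^\flat/(\C\varb\log(TP)))} \le \C$ hold. Feeding the effective moment scale $\varb \log(TP)$ (rather than $\varb$) into \ref{cltlemma} is exactly what produces the extra $\log P$ multiplying $(\varb^6 \log^7(TP)/P)^{1/6}$; the lower-bound hypothesis \eqref{plowerbound} is what guarantees the CLT remainder is in its valid regime and that the empirical variances stay bounded away from zero.

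I expect the main obstacle to be step (ii): controlling the empirical second moments from below uniformly in $t$ while simultaneously keeping the conditioning event's probability at $1 - 4P^{-1}$, and threading the $\log^2 P$ in \eqref{plowerbound} through so that the sub-exponential concentration of the empirical variance around its mean is tight enough. Everything else — the union bound for the sup, the integrated-tail moment estimates, and the final invocation of \ref{cltlemma} — is a routine repetition of the proof of \ref{gar1}, now under $\mathbb{P}^\flat$ and on the good event.
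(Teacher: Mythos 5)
Your proposal follows essentially the same route as the paper's proof: condition on a high-$\mathbb{P}$-probability event where the local deviations are uniformly bounded (so the bootstrap moments of $Z_p^\flat$ are controlled), verify via concentration that the empirical/bootstrap variances stay bounded below, apply \ref{cltlemma} conditionally, then uncondition with \ref{condlemma}. Two details are slightly off, though neither is fatal. First, the union bound is over $P$ events, not $TP$: \ref{varianceass} is already stated with the supremum over $t \in [T]$ inside, so you only pay a factor $P$ when taking the maximum over partitions; the natural choice is then $\x \asymp \log P$ (the paper uses $\x = 2\log P$), which yields the effective scale $\varb \log P$ and hence exactly the extra $\log P$ factor in $R_G^\flat$, whereas your $\varb\log(TP)$ would give a superfluous $T$-dependence. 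Second, the claim that ``the empirical variance dominates a population-variance-type quantity'' because $\fbarT$ minimises $\sum_p \normSq{\fhatmT - a}$ runs in the wrong direction: the minimiser property makes the sample variance an \emph{under}estimate of the empirical second moment about $\ftruerhoT$, via $\frac{1}{P}\sum_p(\fhatmT_t - \fbarT_t)^2 = \frac{1}{P}\sum_p(\fhatmT_t - \ftruerhoT_t)^2 - (\fbarT_t - \ftruerhoT_t)^2$. The lower bound therefore needs two ingredients: a Hoeffding-type concentration of the empirical second moment about $\Var{\fhatmT_t} \ge \s^2$ (this is what the paper does, and what eats the $\log^2 P$ budget in \eqref{plowerbound}) plus smallness of the shift term $(\fbarT_t - \ftruerhoT_t)^2$, which is implicit but should be made explicit. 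With those two corrections your argument aligns with the paper's.
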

\begin{proof}
  Here we will use the notation introduced in the proof of \ref{gar1}. Also define 
  \begin{equation}
    \Zbp \coloneqq \s^{-1}\brac{\fhatmTb - \fbarT}.
  \end{equation}
  For all positive $\x$ on a set of probability at least $1-P\exp(-\x)$ for all $p \in [P]$
  \begin{equation}
    \abs{\Zpt} < \brac{1+\x}\varb.
  \end{equation}
  Thus, on this set for all $t \in [T]$ and $p \in [P]$
  \begin{equation}
    \Eb{\abs{\Zpt^\flat}^3} \le \C \brac{\brac{1+\x} \varb}^{3},
  \end{equation}
  \begin{equation}
    \E{\abs{\Zpt^\flat}^4} \le \C \brac{\brac{1+\x} \varb}^{4}
  \end{equation}
  and
  \begin{equation}
    \E{\exp \brac{\frac{\Zpt^\flat}{(1+\x)\varb}}  } \le \C.
  \end{equation}
  Still conditioning on this set we can apply Hoeffding's inequality. So, for all $\u>0$ on a set of probability at least $1-2T\exp(-\u)$ for all $t\in[T]$
  \begin{equation}
    \abs{ \frac{1}{P} \sump \brac{\brac{\fhatmT - \ftruerhoT}_t^2 - \Var{\brac{\fhatmT}_t} } } > (1+\x)^2\VS^2 \sqrt{\frac{\u}{2P}}.
  \end{equation}
  Now choose $\x=2\log P$, $\u = \log(TP)$ and notice, that under \eqref{plowerbound} 
  \begin{equation}
    \min_{t \in [T]} \Varb{\Zpt^\flat} \ge \s^2/2.
  \end{equation}
  Finally, we are fully equipped to apply \ref{cltlemma} and account for conditioning (\ref{condlemma}).
\end{proof}

Denote a diagonal matrix composed of diagonal elements of $\Var{\fhatmT}$ as 
\begin{equation}
  \Lambda^{-2} = diag \brac{\Var{\fhatmT}_{11}, \Var{\fhatmT}_{22}, ..., \Var{\fhatmT}_{TT}}.
\end{equation}
The last but one ingredient is a concentration of the covariance matrix of $\fbarTb$ around the covariance matrix of its real-world counterpart $\fbarT$.
\begin{lemma}\label{covcov1}
  With probability at least $1-{P^{-10}}$
  \begin{equation}
    \infnorm{\Var{\Lambda\fhatmT} - \Varb{\Lambda\fhatmTb}} \le \C \frac{ {\brac{\infnorm{\fstar}^2 + \varb^2}  \log^{5/2} \brac{ P \vee T} }}{\sqrt{P}}.
  \end{equation}
\end{lemma}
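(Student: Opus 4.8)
The plan is to recognise $\Varb{\fhatmTb}$ as an empirical covariance matrix, recentre it at the population mean $\ftruerhoT$, and control the resulting two pieces entry by entry on the high-probability event provided by \ref{varianceass}. Since under the bootstrap law $\fhatmTb$ is uniform on $\cbrac{\fhatmT}_{p=1}^P$, one has $\Eb{\fhatmTb}=\fbarT$ and $\Varb{\fhatmTb}=\frac1P\sump\brac{\fhatmT-\fbarT}\brac{\fhatmT-\fbarT}^\top$; subtracting and adding $\ftruerhoT$ inside the quadratic form and using $\Var{\fhatmT}=\E{\brac{\fhatmT-\ftruerhoT}\brac{\fhatmT-\ftruerhoT}^\top}$ yields
\begin{equation}
  \Var{\fhatmT}-\Varb{\fhatmTb}=\brac{\Var{\fhatmT}-\frac1P\sump\brac{\fhatmT-\ftruerhoT}\brac{\fhatmT-\ftruerhoT}^\top}+\brac{\fbarT-\ftruerhoT}\brac{\fbarT-\ftruerhoT}^\top\eqqcolon A+B.
\end{equation}
With $Z_p\coloneqq\s^{-1}\brac{\fhatmT-\ftruerhoT}$ as in \ref{gar1}, the $(s,t)$ entry of $\Lambda A\Lambda$ equals $\Lambda_{ss}\Lambda_{tt}\s^2\brac{\E{Z_{p,s}Z_{p,t}}-\frac1P\sump Z_{p,s}Z_{p,t}}$ with $\Lambda_{ss}\Lambda_{tt}\s^2\le1$ by \ref{lowersass}, while the $(s,t)$ entry of $\Lambda B\Lambda$ is $a_sa_t$ with $a_t\coloneqq\Lambda_{tt}\brac{\fbarT-\ftruerhoT}_t=\frac1P\sump Z_{p,t}/\brac{\Var{Z_{p,t}}}^{1/2}$. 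Hence it suffices to bound $\infnorm{\Lambda A\Lambda}$ and the numbers $a_t$.

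For $A$, I would apply \ref{varianceass} with deviation level $\x\asymp\log(TP)$ and a union bound over $p\in[P]$: on an event $\mathcal E$ with $\Prob{\mathcal E}\ge1-\tfrac12P^{-10}$ one has $\abs{Z_{p,t}}\le(1+\x)\varb$ for all $p,t$, and hence $\abs{Z_{p,s}Z_{p,t}}\le(1+\x)^2\varb^2$ there. Replacing each $Z_{p,s}Z_{p,t}$ by its truncation at this level produces genuinely independent bounded summands that coincide with the originals on $\mathcal E$ and whose means differ from $\E{Z_{p,s}Z_{p,t}}$ by a quantity of smaller order than $P^{-1/2}$ (the fourth moments $\E{\abs{Z_{p,t}}^4}\le\C\varb^4$ being finite by the integrated-tail argument of \ref{gar1}). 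Hoeffding's inequality for the truncated sum with deviation $\u\asymp\log(TP)$, a union bound over the at most $T^2$ pairs, and \ref{condlemma} to pass from the conditional to the unconditional statement, then give
\begin{equation}
  \infnorm{\Lambda A\Lambda}\le\C(1+\x)^2\varb^2\sqrt{\frac{\u}{P}}\le\C\frac{\varb^2\log^{5/2}\brac{P\vee T}}{\sqrt P}
\end{equation}
on a further event of probability at least $1-\tfrac12P^{-10}$, the exponent $5/2$ arising as $\log^2(TP)$ from the truncation level times $\log^{1/2}(TP)$ from the Hoeffding deviation. For $B$, each summand $Z_{p,t}/\brac{\Var{Z_{p,t}}}^{1/2}$ has unit variance and, by \ref{varianceass} and $\Var{Z_{p,t}}\ge1$, sub-exponential parameter $\lesssim\varb$, so a Bernstein-type bound for the average together with a union bound over $t$ gives $\abs{a_t}\le\C\varb\brac{\sqrt{\log(TP)/P}+\log(TP)/P}$ for all $t$, whence $\infnorm{\Lambda B\Lambda}\le\max_t a_t^2\le\C\varb^2\log^2(TP)/P$, which is dominated by the bound on $\infnorm{\Lambda A\Lambda}$.

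Intersecting the three events (total probability at least $1-P^{-10}$) and adding the two estimates gives $\infnorm{\Var{\Lambda\fhatmT}-\Varb{\Lambda\fhatmTb}}\le\C\varb^2\log^{5/2}\brac{P\vee T}/\sqrt P$, which in particular implies the claimed bound since $\infnorm{\fstar}^2\ge0$; the $\infnorm{\fstar}^2$ term in the statement is exactly the harmless slack produced by the alternative, un-recentred decomposition $\Var{\fhatmT}-\Varb{\fhatmTb}=\brac{\E{\fhatmT\fhatmT^\top}-\frac1P\sump\fhatmT\fhatmT^\top}+\brac{\fbarT\fbarT^\top-\ftruerhoT\ftruerhoT^\top}$, in which $\infnorm{\fhatmT}\le\infnorm{\fstar}+\B+\s(1+\x)\varb$ enters as the Hoeffding boundedness parameter. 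The one genuinely delicate point is the truncation-and-conditioning step for $A$: conditioning on $\mathcal E$ breaks the independence across $p$ that Hoeffding's inequality requires, so one must work with truncated surrogates that agree with the originals on $\mathcal E$, invoke \ref{condlemma}, and check that the truncation shifts the relevant means only by $o(P^{-1/2})$, exactly along the lines of the proof of \ref{garb}.
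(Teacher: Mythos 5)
Your proof is correct, and it takes a recognisably different route from the paper's. The paper introduces the auxiliary vector $\ftT = \ftruerhoT + \Lambda\brac{\ftruerhoT - \fhatmT}$, which has covariance $\Var{\Lambda\fhatmT}$ but carries the nonzero mean $\ftruerhoT$; the concentration of the empirical covariance of $\ftT$ around its population value is then established by element-wise Hoeffding, with the summands $\ftT\ftT^\top$ bounded on a high-probability event by $\C\brac{\infnorm{\fstar} + (1+\x)\varb}^2$ -- this is where $\infnorm{\fstar}$ enters the statement. You instead recentre at $\ftruerhoT$ before scaling, splitting $\Var{\fhatmT}-\Varb{\fhatmTb}$ into a genuine empirical-second-moment deviation $A$ and a rank-one mean term $B$; because both pieces only involve the centred, standardised variables $Z_{p,t}$, your bound comes out as $\C\varb^2\log^{5/2}(P\vee T)/\sqrt P$, which is a strict sharpening (the $\infnorm{\fstar}^2$ factor vanishes). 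Your treatment of the $B$ term via a Bernstein bound on the averages $a_t$ is a clean way to show it is lower order, and your truncation-and-recoupling argument for the Hoeffding step -- replacing $Z_{p,s}Z_{p,t}$ by truncated surrogates that remain independent, and controlling the truncation bias via the fourth moments rather than conditioning directly on $\mathcal{E}$ -- addresses a point that the paper's proof handles only implicitly (the paper applies Hoeffding and simply adds the failure probability of the truncation event). Both approaches yield the stated inequality; yours is tighter and more careful about the independence-under-conditioning issue, at the price of the extra $A/B$ bookkeeping.
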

\begin{proof}
   First, consider 
  \begin{equation}
    \ftT \coloneqq \ftruerhoT + \Lambda\brac{\ftruerhoT - \fhatmT}
  \end{equation}
  and in the same way
  \begin{equation}
    \ftTb \coloneqq \fbarT + \Lambda\brac{\fbarT - \fhatmTb}.
  \end{equation}
  Clearly, 
  \begin{equation}
    \Var{\ftT } = \Var{\Lambda\fhatmT} \text{ and } \Varb{\ftTb } = \Varb{\Lambda\fhatmTb}.
  \end{equation}
  Now notice, $\Varb{\ftTb}$ is actually an empirical covariance matrix
  \begin{equation}
    \Varb{\ftTb } = \frac{1}{P} \sump (\ftT - \fbarT)(\ftT - \fbarT)^\top,
  \end{equation}
  which boils the problem down to demonstrating the concentration of an empirical covariance matrix. We shall achieve that via applying Hoeffding's inequality element-wise.
  By \ref{varianceass} with probability at least $1-P\exp(-\x)$ for all $p \in [P]$
  \begin{equation}
    \infnorm{\ftT \ftT^\top} \vee \infnorm{\ftT \fbarT^\top} \vee \infnorm{\fbarT \fbarT^\top} \le \C\brac{\infnorm{\fstar} + \brac{1+\x}\s^{-1}\VS}^2
  \end{equation} 
  and hence
  \begin{equation}
    \infnorm{ (\ftT - \fbarT)(\ftT - \fbarT)^\top} \le \C\brac{\infnorm{\fstar} + \brac{1+\x}\s^{-1}\VS}^2.
  \end{equation}
  Thus, by Hoeffding's inequality we have for any positive $\u$
  \begin{equation}
    \Prob{\infnorm{\Var{\ftT} - \Varb{\ftTb}} > \C \frac{\u \brac{\infnorm{\fstar} + \brac{1+\x}\varb}^2  }{\sqrt{P}}} \le T^2 \exp(-\u^2) + P\exp(-\x).
  \end{equation}
  The choice $\u^2 = 10 \log P + 2 \log T$, and $\x = 11 \log P$ completes the argument.
\end{proof}
Finally, we are ready to consummate the final step \eqref{app3}.
\begin{lemma}\label{covcovcov}
  Consider two independent centered Gaussian vectors $X, Y\in \R^T$ with covariance matrices $\Var{\fhatmT}$ and $\Var{\fhatmTb}$ respectively. Let $\s^2 = \min_i \Var{X_i}$.
  Then uniformly over $l,u\in \R^T$ we have
  \begin{equation}
    \abs{\Prob{l < X < u} - \Prob{l < Y < u}} \le \C\Delta^{1/3}\brac{1 \vee \log (TP)}^{2/3} + 2{P^{-10}},
  \end{equation}
  where 
  \begin{equation}
    \Delta \coloneqq  \C \frac{ { \brac{\infnorm{\fstar}^2 + \varb^2}  \log^{5/2} \brac{ P \vee T} }}{\sqrt{P}}.
  \end{equation}
\end{lemma}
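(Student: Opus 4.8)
The plan is to reduce the statement to the hyperrectangle Gaussian comparison inequality of \cite{Chernozhukov2013}, whose error term is governed by the sup-norm distance between the two covariance matrices once they have been rescaled to share the same (unit) diagonal; that distance is precisely the quantity $\Delta$ coming out of Lemma \ref{covcov1}.

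First I would rescale. Let $\Lambda$ be the diagonal matrix with $\Lambda_{tt} = \Var{\fhatmT}_{tt}^{-1/2}$, so that $\Lambda^{-2} = \mathrm{diag}\brac{\Var{\fhatmT}_{11},\dots,\Var{\fhatmT}_{TT}}$ as above; Assumption \ref{lowersass} makes this well defined with $0 < \Lambda_{tt} \le \s^{-1}$. Since $\Lambda$ is diagonal with positive entries, for any $l,u \in \R^T$ the event $\cbrac{l < X < u}$ coincides with $\cbrac{\Lambda l < \Lambda X < \Lambda u}$, and $(l,u) \mapsto (\Lambda l, \Lambda u)$ is a bijection of $\R^T \times \R^T$; hence
\begin{equation}
  \sup_{l,u\in\R^T}\abs{\Prob{l < X < u} - \Prob{l < Y < u}} = \sup_{l,u\in\R^T}\abs{\Prob{l < \Lambda X < u} - \Prob{l < \Lambda Y < u}},
\end{equation}
where $\Lambda X \sim \Normal{0}{\Var{\Lambda\fhatmT}}$ has a covariance with unit diagonal and $\Lambda Y \sim \Normal{0}{\Varb{\Lambda\fhatmTb}}$.

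Next I would apply the Gaussian comparison bound over hyperrectangles of \cite{Chernozhukov2013} to the centred Gaussian vectors $\Lambda X, \Lambda Y \in \R^T$. Writing $\Delta_0 \coloneqq \infnorm{\Var{\Lambda\fhatmT} - \Varb{\Lambda\fhatmTb}}$, and using that the diagonal of the reference covariance $\Var{\Lambda\fhatmT}$ equals $1$ (and, for $P$ large, the diagonal of $\Varb{\Lambda\fhatmTb}$ lies in a bounded neighbourhood of $1$ by Lemma \ref{covcov1}), this yields
\begin{equation}
  \sup_{l,u\in\R^T}\abs{\Prob{l < \Lambda X < u} - \Prob{l < \Lambda Y < u}} \le \C\,\Delta_0^{1/3}\brac{1\vee\log\brac{T/\Delta_0}}^{2/3}.
\end{equation}
It then remains to insert the bound $\Delta_0 \le \Delta$, which holds on the event of probability at least $1 - P^{-10}$ provided by Lemma \ref{covcov1}: since $x \mapsto x^{1/3}\brac{1\vee\log(T/x)}^{2/3}$ is nondecreasing for $x \le T/e^2$, on that event the last display is at most $\C\Delta^{1/3}\brac{1\vee\log(T/\Delta)}^{2/3}$, and one replaces $\log(T/\Delta)$ by $\C\brac{1 \vee \log(TP)}$ using that $\Delta^{-1}$ is at most a fixed power of $TP$ outside a negligible regime where the whole right-hand side is already $O(P^{-10})$; the failure event of Lemma \ref{covcov1} contributes the additive $2P^{-10}$.

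The argument is essentially routine given Lemma \ref{covcov1}; the only mildly delicate point is this last bookkeeping step -- trading the data-dependent factor $\log(T/\Delta_0)$ for the clean $1 \vee \log(TP)$ uniformly over the relevant range of $\Delta_0$, and correctly absorbing the exceptional event into the additive $P^{-10}$ terms rather than into the leading term.
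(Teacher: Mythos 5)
Your proposal is correct and follows the same route as the paper: rescale by $\Lambda$ so the reference covariance has unit diagonal, apply the Gaussian comparison inequality (\ref{comparison1}) to $\Lambda X$ and $\Lambda Y$, feed in the sup-norm covariance bound from \ref{covcov1}, absorb the exceptional event into the additive $2P^{-10}$ via \ref{condlemma}, and use $\Delta \gtrsim P^{-1}$ to trade $\log(T/\Delta)$ for $\log(TP)$. The only cosmetic difference is that you also remark on the diagonal of $\Varb{\Lambda\fhatmTb}$ being near $1$, which is unnecessary since the constant in \ref{comparison1} depends only on the diagonal of the reference covariance $\Sigma_1 = \Var{\Lambda\fhatmT}$.
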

\begin{proof}
  We  cannot apply \ref{comparison1} directly, as the variances of the components of $X$ are not constant (they depend on $S$). To mitigate the issue we apply it to vectors $X' = \Lambda X$ and $Y' = \Lambda Y$ instead, employing \ref{covcov1}. The rest is due to \ref{condlemma}, accounting for conditioning  and the fact that $\Delta \ge P^{-1}$.
\end{proof}

\begin{proof}{\bf{of \ref{generaltheorem}.}}
  The argument consists in combining of \ref{gar}, \ref{garb} and \ref{covcovcov} by the means of the triangle inequality. 
  Notice, $P^{-1}$ gets absorbed by the slower decaying terms, which yields
  \begin{equation}
    \begin{split}
      \DP &\le \C \brac{P^{-1} \log^{7}(TP)}^{1/6} \varb \log P \\
      & + \C \brac{P^{-1} \brac{\infnorm{\fstar}^4 + \varb^4} }^{1/6} \log^{5/6}\brac{P \vee T} \log^{2/3} (PT) \\
      & + \C \s^{-1} \sqrt{P}\B \brac{\sqrt{\log T} +1}.
    \end{split}
  \end{equation}
  Finally, recall that $\varb \ge \c$ and thus $\varb^6 \ge \C\varb^4$, which allows us  to ``combine'' the first two lines.
\end{proof}
\section{Proofs for KRR}\label{KRRproofsec}
We open the section with some notation.
Consider the eigenvalues $\cbrac{\mu_j}_{j=1}^\infty$ and normalized eigenfunctions $\cbrac{\phi_j(\cdot)}_{j=1}^\infty$ of $k(\cdot, \cdot)$ w.r.t. the continuous measure $\pi$.
Now for $f\in \H$ we have the vector $\param$ of scaled expansion coefficients $\param_j \coloneqq \mu_j^{-1/2} \dotprod{f}{\phi_j}$. 
Further, we introduce the design matrix $\Psi \in \R^{n\times \infty}$ s.t. $\Psi_{ij} = \sqrt{\mu_j}\phi_j(X_i)$. 
This allows for the following reformulation of the problem \eqref{krr} 
\begin{equation}
  \hatparam \coloneqq \arg\max_{\param}\brac{\overbrace{\underbrace{-\frac{1}{2n}\normSq{\Psi \param - y}}_{L(\param)}-\frac{\rho}{2}\normSq{\param}}^{\Lrho(\param)}}.
\end{equation}
We also define 
\begin{equation}
  \trueparam \coloneqq \arg\max_{\param} \E{L(\param)}
\end{equation}
and its penalized counterpart
\begin{equation}\label{thetastarrhodef}
  \trueparamrho \coloneqq \arg\max_{\param} \E{L(\param)} - \frac{\rho}{2}\normSq{\param}.
\end{equation}
We also introduce a vector $\eps \in \R^n$ s.t. $\eps_i = \varepsilon_i$.
By the means of trivial calculus we have 
\begin{equation}
  \score \coloneqq \nabla\brac{L(\param) - \mathbb{E}_{\varepsilon}[L(\param)]} = \frac{1}{n} \Psit\eps
\end{equation}
and 
\begin{equation}
  \DrhoSq \coloneqq -\nabla^2\brac{L(\param)-  2\rho\norm{\param}} = \frac{1}{n} \Psit \Psi + \rho I.
\end{equation}
Taylor expansion around the stationary point $\hatparam$ yields
\begin{equation}
  \nabla\Lrho(\param)  =  -\DrhoSq \brac{\param - \hatparam}.
\end{equation}
Next, we notice $\nabla \E{\Lrho(\trueparamrho)} = 0$ and have
\begin{equation}
  \nabla \Lrho(\trueparamrho) = \nabla \zeta = -\DrhoSq \brac{\trueparamrho - \hatparam}.
\end{equation}
Finally, relying on the fact that $\rho>0$, thus $\DrhoSq$ is invertible, we multiply both sides of the last equation by $\DrhoInv$, obtaining {\it Fisher expansion}
\begin{equation}\label{fisher}
  \Drho \brac{\hatparam - \trueparamrho} = \DrhoInv \nabla \zeta.
\end{equation}
This result has been generalized in \citep{Spokoiny2019} (Theorem 2.2).

Now consider a diagonal matrix $M \in \R^{\infty\times\infty}$ s.t. $M_{jj}^2={\mu_j}$ and notice $M^2 + \rho I = \E{\DrhoSq}$. This motivates the following design regularity assumption.
\begin{assumption}[Design regularity]\label{designass}
  Let there exist some positive $\delta$ s. t.
  \begin{equation}
  \norm{\brac{M^2 + \rho I}\minusroot \DrhoSq \brac{M^2 + \rho I}\minusroot - I} \le \delta < 1.
  \end{equation}
\end{assumption}
\ref{lemmadesign} (proven in \citep{avanesov2020}) guarantees that it holds for a random design with $X_i$ being i.i.d. and distributed w.r.t. a continuous measure.

\begin{lemma}\label{tracetrick}
\begin{equation}
  \sumj \frac{\mu_j}{(\mu_j  + \rho)^2} \le \frac{\C\trace}{\rho}.
\end{equation}
\end{lemma}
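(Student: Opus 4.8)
The plan is to recognize both the left-hand side and the effective dimension $\trace$ as sums of elementary functions of the eigenvalues $\mu_j$, after which the bound reduces to a termwise comparison.

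First I would recall that, by Mercer's theorem, the integral operator $W_k$ is diagonalized by the eigenfunctions $\phi_j$ of $\kdd$ and has the eigenvalues $\mu_j$ (w.r.t. $\pi$). Hence $\inv{\brac{W_k + \rho I}}W_k$ is diagonal in the same orthonormal basis with eigenvalues $\mu_j/(\mu_j + \rho)$, and the trace is the sum of these eigenvalues:
\begin{equation}
  \trace = \tr{\inv{\brac{W_k + \rho I}}W_k} = \sumj \frac{\mu_j}{\mu_j + \rho}.
\end{equation}

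Next I would use the trivial bound $\mu_j + \rho \ge \rho$ (valid since $\mu_j \ge 0$), which gives $\frac{1}{\mu_j+\rho} \le \frac{1}{\rho}$ for every $j$, and therefore
\begin{equation}
  \frac{\mu_j}{(\mu_j+\rho)^2} = \frac{1}{\mu_j+\rho}\cdot\frac{\mu_j}{\mu_j+\rho} \le \frac{1}{\rho}\cdot\frac{\mu_j}{\mu_j+\rho}.
\end{equation}
Summing over $j$ and invoking the identity for $\trace$ above yields the claim, in fact with the constant $\C = 1$.

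There is no genuine obstacle here: the only step deserving a word of comment is the identification of $\trace$ with $\sumj \mu_j/(\mu_j+\rho)$, which is immediate from Mercer's theorem together with the definition of the trace as the sum of eigenvalues; the remainder is the elementary inequality $\mu_j + \rho \ge \rho$. (If one is wary of convergence, note that under \ref{polyeigen} with $b>1$ the series defining $\trace$ converges, so both sides are finite and the comparison is unambiguous; in any case the inequality holds in $[0,+\infty]$.)
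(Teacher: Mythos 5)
Your proof is correct and follows essentially the same route as the paper: bound $\frac{\mu_j}{(\mu_j+\rho)^2} \le \frac{1}{\rho}\cdot\frac{\mu_j}{\mu_j+\rho}$ via $\mu_j + \rho \ge \rho$, then sum and identify the result with $\trace/\rho$. You additionally spell out the identification $\trace = \sumj \mu_j/(\mu_j+\rho)$ and observe that $\C=1$ suffices, which the paper leaves implicit.
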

\begin{proof}
  \begin{equation}
  \begin{split}
    \sumj \frac{\mu_j}{(\mu_j  + \rho)^2} \le \frac{1}{\rho} \sumj \frac{\mu_j}{\mu_j + \rho } 
     = \frac{\C \trace}{\rho}.
  \end{split}
  \end{equation}
\end{proof}
In the rest of the section we implicitly impose  \ref{polyeigen}, \ref{eigenbound}, and \ref{subgaussass}.
The next lemma bounds the variance term in RKHS-norm. 
\begin{lemma}\label{varianceHK}
  For all $\x>0$, $\t>2.6$
  \begin{equation}
    \Prob{\HknormSq{\frho-\fhat} \le \C\x\t \g^2  \frac{\trace}{n\rho} } \ge 1-e^{-\x} - e^{-\t/2}.
   \end{equation}
\end{lemma}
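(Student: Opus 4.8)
The plan is to exploit the Fisher expansion \eqref{fisher}, which gives $\Drho\brac{\hatparam - \trueparamrho} = \DrhoInv \score$ with $\score = \frac1n \Psit\eps$. Since $\HknormSq{\frho - \fhat} = \normSq{\trueparamrho - \hatparam}$ (the RKHS norm is the $\ell_2$ norm of the scaled coefficients), and $\DrhoSq \succeq \rho I$, we have $\normSq{\trueparamrho - \hatparam} \le \rho^{-1}\normSq{\Drho(\hatparam - \trueparamrho)} = \rho^{-1}\normSq{\DrhoInv\score}$. So the task reduces to bounding $\normSq{\DrhoInv \score}$ with high probability. First I would use the design regularity assumption \ref{designass}: since $\DrhoSq$ is within a factor $1\pm\delta$ of $\Frho = \E{\DrhoSq}$ in operator norm, we can replace $\DrhoInv$ by $\Frho\minusroot$ up to a constant, reducing the target to $\normSq{\Frho\minusroot \score} = \frac{1}{n^2}\normSq{\Frho\minusroot \Psit \eps}$.

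Next I would condition on the design $\{X_i\}$ and treat this as a quadratic form in the sub-Gaussian noise vector $\eps$. Writing $A \coloneqq \frac1n \Frho\minusroot\Psit$, we need a tail bound on $\normSq{A\eps}$. Its conditional expectation is $\tr{A A^\top \mathrm{Var}[\eps \mid X]} \le \g^2 \tr{AA^\top} = \frac{\g^2}{n^2}\tr{\Psi \Frho^{-1}\Psit} = \frac{\g^2}{n}\tr{\brac{\tfrac1n\Psi\Psit}\Frho^{-1}}$, and using \ref{designass} again this trace is comparable to $\tr{\Frho^{-1}M^2} = \sumj \frac{\mu_j}{\mu_j + \rho} \asymp \trace$ by the definition of effective dimension, giving the order $\g^2 \trace / n$. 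A Hanson–Wright-type inequality for sub-Gaussian quadratic forms (or the Bernstein-type bound of \cite{Spokoiny2019}, to which the text alludes after \eqref{fisher}) then yields $\normSq{A\eps} \le \C\x \g^2 \trace/n$ on a set of conditional probability at least $1 - e^{-\x}$. Here I expect the operator-norm control $\norm{AA^\top} \le \C(n\rho)^{-1}$ — needed for the variance factor in the Bernstein bound — to follow again from \ref{designass} together with $\Frho^{-1} \preceq \rho^{-1}I$.

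The one remaining piece is that \ref{designass} is itself a high-probability event for random design; \ref{lemmadesign} supplies it, but with some failure probability that I would absorb into the $e^{-\t/2}$ term (this is presumably the role of the auxiliary parameter $\t > 2.6$ and the second exponential in the statement). So the final bound is assembled by a union bound: off an event of probability $\le e^{-\t/2}$ the design is regular, and conditionally on a regular design the quadratic-form tail bound fails with probability $\le e^{-\x}$, combining via $\rho^{-1}$-rescaling into $\HknormSq{\frho - \fhat} \le \C \x \t \g^2 \trace/(n\rho)$. The main obstacle is handling the interaction between the two randomness sources cleanly — making sure the sub-Gaussian quadratic-form bound is applied conditionally on $X$ while the design-regularity event is controlled unconditionally — and tracking how the constants from \ref{designass} propagate through the operator-norm and trace estimates; the probabilistic inputs (Hanson–Wright / Spokoiny's Bernstein bound and \ref{lemmadesign}) are otherwise standard.
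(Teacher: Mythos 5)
Your route is essentially the paper's: Fisher expansion to reduce to a sub-Gaussian quadratic form, \ref{designass} to replace the random operator by $\Frho$, Hanson--Wright for the tail, and a union bound with \ref{lemmadesign} producing the $e^{-\t/2}$ term. Splitting off $\rho^{-1}$ via $\DrhoSq\succeq\rho I$ first, rather than keeping $\DrhoSq^{-2}$ and applying \ref{tracetrick} at the end, is only a cosmetic reordering.

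There is, however, one step that does not go through as stated. You write
\begin{equation}
\tr{AA^\top}=\frac1n\tr{\Frho^{-1}\cdot\tfrac1n\Psit\Psi}\quad\text{and claim ``using \ref{designass} this is comparable to }\tr{\Frho^{-1}M^2}.\text{''}
\end{equation}
\ref{designass} gives $\norm{\Frho\minusroot\brac{\tfrac1n\Psit\Psi-M^2}\Frho\minusroot}\le\delta$, which is an \emph{operator-norm} statement on an infinite-dimensional operator; it does not control the trace, since the naive bound $\tr{\Frho^{-1}\tfrac1n\Psit\Psi}\le\tr{\Frho^{-1}M^2}+\delta\,\tr{I}$ has a divergent second term. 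The paper sidesteps this entirely: it regards the quadratic-form matrix as the $n\times n$ matrix $A=\tfrac{1}{n^2}\Psi\Frho^{-2}\Psit$ and bounds each diagonal entry \emph{deterministically} via \ref{eigenbound},
\begin{equation}
A_{ii}\le\frac{\phiinf^2}{n^2}\sumj\frac{\mu_j}{(\mu_j+\rho)^2}\le\frac{\C\trace}{n^2\rho},
\end{equation}
from which $\tr A$, $\tr{A^2}$ and $\normF A$ follow immediately, with \ref{designass} used only to move from $\DrhoSq^{-2}$ to $\Frho^{-2}$ (a genuine operator-norm comparison). So the tool you want for the trace is the uniform eigenfunction bound, not design regularity; once that is corrected, the rest of your argument matches the paper's proof.
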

\begin{proof}
  By construction, using Fisher expansion \eqref{fisher} and under \ref{designass} we have 
  \begin{equation}
  \begin{split}
    \HknormSq{\frho-\fhat} &= \normSq{\trueparamrho - \hatparam} \\
    &= \normSq{\DrhoSqInv \nabla \zeta} \\
    &= \frac{1}{n^2} \epst\Psit \brac{\frac{1}{n}\Psit\Psi + \rho I}^{-2} \Psi \eps \\
    &\le \frac{1}{(1-\delta)} \epst\underbrace{\Psit \frac{1}{n^2}\brac{M^2 + \rho I}^{-2} \Psi}_A \eps.
  \end{split}
  \end{equation}
  For every diagonal element of $A$ we have by \ref{tracetrick}
  \begin{equation}
  \begin{split}
    A_{ii} &\le \frac{\phiinf^2}{n^2}\sumj \frac{\mu_j}{(\mu_j  + \rho)^2} \\
    & \le \frac{\C \trace}{n^2\rho}.
  \end{split}
  \end{equation}
  Thus, trivially
  \begin{equation}
    \tr{A} \le \frac{\C \trace}{n\rho},
  \end{equation}
  
  \begin{equation}
    \tr{A^2} \le  \brac{\frac{\C \trace}{n\rho}}^2
  \end{equation}
  and
  \begin{equation}
    \normF{A}=\sqrt{\tr{A^\top A}} \le  \frac{\C \trace}{n\rho}.
  \end{equation}
  Applying the Hanson-Wright inequality \citep{rudelson2013} followed by applying \ref{lemmadesign} finalizes the proof.
\end{proof} 

\begin{lemma}\label{variancebarHK}
  Let 
  \begin{equation}\label{pbound}
    P \le \c N\rho^{2/b}.
  \end{equation}
  Then for all $\x>0$
  \begin{equation}
    \Prob{\HknormSq{\frho-\fbar} \le \C\x^{5/2} \g^2  \frac{\trace}{n\rho} } \ge 1-e^{-\x}.
   \end{equation}
\end{lemma}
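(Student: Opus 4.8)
The plan is to reduce $\HknormSq{\frho-\fbar}$ to a quadratic form in the noise and to recover a factor $P$ over the naive bound by exploiting that the $P$ subsamples are independent. Writing $\barparam=P^{-1}\sump\hatparamm$ for the coefficient vector of $\fbar$, one has $\HknormSq{\frho-\fbar}=\normSq{\trueparamrho-\barparam}$. I would apply the Fisher expansion \eqref{fisher} on each partition separately --- legitimate because every one of the $P$ subsamples is i.i.d.\ and, by \ref{lemmadesign}, satisfies \ref{designass} with some $\delta<1$ on a high-probability event --- which gives $\hatparamm-\trueparamrho=\DrhomSqInv S^{-1}\Psimt\epsm$, hence, since $PS=n$,
\begin{equation}
  \trueparamrho-\barparam \;=\; -\frac1n\sump\DrhomSqInv\Psimt\epsm \;=\; -\frac1n B\eps,\qquad \normSq{\trueparamrho-\barparam}=\eps^\top Q\eps,
\end{equation}
where $\eps\in\R^n$ is the stacked noise vector, $B\in\R^{\infty\times n}$ is the (random) operator whose $p$-th column block is $\DrhomSqInv\Psimt$, and $Q\coloneqq n^{-2}B^\top B\succeq 0$.

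The heart of the argument is the bound on $\tr Q$. Since the trace picks up only the diagonal blocks of $B^\top B$,
\begin{equation}
  \tr Q \;=\; \frac1{n^2}\sump\tr{\Psim\brac{\DrhomSq}^{-2}\Psimt},
\end{equation}
and each summand is controlled, on the design event, by precisely the computation in the proof of \ref{varianceHK}, now with the per-partition sample size $S$ in place of $n$: one passes from $\brac{\DrhomSq}^{-2}$ to $\Frho^{-2}$ via \ref{designass} and bounds the $S$ diagonal entries of $\Psim\Frho^{-2}\Psimt$ using \ref{eigenbound} and \ref{tracetrick}, obtaining $\tr{\Psim\brac{\DrhomSq}^{-2}\Psimt}\le \C S\,\trace/\rho$. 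Therefore $\tr Q\le n^{-2}\cdot P\cdot \C S\,\trace/\rho=\C\,\trace/(n\rho)$. This is exactly the factor-$P$ gain: bounding instead $\normSq{\trueparamrho-\barparam}\le P^{-1}\sump\normSq{\trueparamrho-\hatparamm}$ by convexity and union-bounding \ref{varianceHK} over $p$ would only give $\C\,\trace/(S\rho)=P\cdot\C\,\trace/(n\rho)$. Since $Q\succeq 0$, we also have $\|Q\|\vee\normF Q\le\tr Q$.

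To conclude I would, conditionally on all $P$ designs (on the event where \ref{designass} holds for every $p$), apply the Hanson--Wright inequality \citep{rudelson2013} to $\eps^\top Q\eps$: by \ref{subgaussass} the entries of $\eps$ are conditionally centered and sub-Gaussian with conditional variance at most $\g^2$, so $\E{\eps^\top Q\eps\mid\text{designs}}\le\g^2\tr Q$, and since $\|Q\|\vee\normF Q\le\tr Q$ the linear term of the Hanson--Wright bound dominates, yielding $\eps^\top Q\eps\le\C\x\g^2\,\trace/(n\rho)$ with conditional probability at least $1-e^{-\x}$ for $\x\ge 1$. It then remains to remove the conditioning by controlling, via \ref{lemmadesign}, the design-regularity event for all $P$ partitions simultaneously; this is where hypothesis \eqref{pbound} enters, as it forces $S=n/P\ge\c^{-1}\rho^{-2/b}$ (a multiple of $\trace^2$ under \ref{polyeigen}), so that \ref{designass} holds with $\delta$ bounded away from $1$ on every block at once. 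Tracking the interplay between the Hanson--Wright tail, the union bound over the $P$ partitions, and the $\x$-dependent $\delta$ delivered by \ref{lemmadesign} is what promotes the linear $\x$ to the stated polynomial factor $\x^{5/2}$.

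I expect this last step to be the main obstacle. The factor $P$ genuinely cannot be obtained by a union bound over \ref{varianceHK} --- that would discard the variance reduction --- so it has to be extracted from the single quadratic form $\eps^\top Q\eps$, and then the design-regularity event, which is what makes the passage from $\brac{\DrhomSq}^{-2}$ to $\Frho^{-2}$ inside each diagonal block legitimate, must be handled uniformly over all $P$ designs. It is the careful accounting of these probabilities, together with \eqref{pbound}, that is the delicate part.
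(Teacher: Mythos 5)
Your proof is correct and follows essentially the same route as the paper's: Fisher expansion on each partition, collapse to a single quadratic form $\eps^\top Q\eps$ in the stacked $n$-dimensional noise, bound the Hanson--Wright parameters via the per-partition trace (which is where the factor-$P$ gain over a naive union bound comes from), and remove the conditioning on the $P$ design-regularity events via \ref{lemmadesign} together with \eqref{pbound}. The only cosmetic difference is that the paper replaces each $\DrhomSqInv$ by $(1-\delta)^{-1}\Frho^{-1}$ \emph{before} forming the quadratic form, reducing directly to the matrix already analyzed in \ref{varianceHK}, whereas you keep the random $\DrhomSq^{-2}$ inside $Q$ and invoke design regularity per diagonal block; both give $\tr Q\lesssim\trace/(n\rho)$ and the same conclusion.
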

\begin{proof}
  Via Fisher's expansion \eqref{fisher} we have for each $p \in [P]$
  \begin{equation}
    \hatparamm - \trueparamrho = \DrhomSqInv \sum_{i \in S_p} \psi(X_i) \eps_i,
  \end{equation}
  under \ref{designass} we average over $p\in [P]$
  \begin{equation}
    \norm{\barparam - \trueparamrho} \le \norm{\frac{1}{1-\delta} \brac{M^2 + \rho I}^{-1} \Psi \eps}.
  \end{equation}
  Now by the same argument as in \ref{varianceHK} and using \eqref{pbound} (providing $\brac{1 - \delta}^{-1} < \c \t$) we obtain the claim after the choice $\x = \t$.
\end{proof}

\begin{lemma}\label{varianceinf}
  \ref{varianceass} holds for Kernel Ridge Regression with 
  \begin{equation}
    \Vn =\C \sqrt{\frac{\trace}{n \rho^{1/b}}}.
  \end{equation}
  Moreover, under \ref{polyeigen}
  \begin{equation}
    \Vn \le  {\frac{\C}{\sqrt{n} \rho^{1/b}}}.
  \end{equation}
\end{lemma}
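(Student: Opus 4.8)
Write-up plan for \ref{varianceinf}.

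The plan is to convert the sup-norm variance term into a quadratic form in the noise via the Fisher expansion \eqref{fisher}, and then to peel off the deterministic ``query direction'' by a Cauchy--Schwarz step with a well-chosen weight. Write $\psi(x)\in\R^\infty$ for the feature map $\psi(x)_j=\sqrt{\mu_j}\phi_j(x)$, so that $(\fhat-\frho)(\tXt)=\psi(\tXt)^\top\brac{\hatparam-\trueparamrho}$; recall $\ftruerho=\frho$ (the mean of $\fhat$ equals the ridge target, by \eqref{fisher} and centredness of the noise), so it suffices to bound $\sup_t\abs{\psi(\tXt)^\top\brac{\hatparam-\trueparamrho}}$, and note that $\hatparam-\trueparamrho=\DrhoSqInv\tfrac1n\Psit\eps$ depends on the noise and design only, not on $\fstar$ (so no source condition is needed here). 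Since $\DrhoSq\succ0$, Cauchy--Schwarz in the $\DrhoSqInv$-metric gives
\[
  \abs{(\fhat-\frho)(\tXt)}^2\le\brac{\psi(\tXt)^\top\DrhoSqInv\psi(\tXt)}\brac{\tfrac1{n^2}\eps^\top\Psi\DrhoSqInv\Psit\eps}.
\]
The key feature is that the second (random) factor does not depend on $t$ at all, while the first (deterministic up to the design) factor is controlled over $t$ with no union bound -- which is exactly why the resulting $\Vn$ carries no $\log T$.

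For the first factor, invoke \ref{designass} (valid with high probability for i.i.d.\ continuous design by \ref{lemmadesign}) to replace $\DrhoSqInv$ by $(1-\delta)^{-1}\inv\Frho$, and then \ref{eigenbound}:
\[
  \psi(\tXt)^\top\inv\Frho\,\psi(\tXt)=\sumj\frac{\mu_j\phi_j(\tXt)^2}{\mu_j+\rho}\le\phiinf^2\sumj\frac{\mu_j}{\mu_j+\rho}=\phiinf^2\trace,
\]
uniformly in $t$. The same replacement bounds the second factor by $(1-\delta)^{-1}\eps^\top B\eps$ with $B\coloneqq\tfrac1{n^2}\Psi\inv\Frho\Psit\succeq0$, and \ref{eigenbound} gives $\tr{B}\le\phiinf^2\trace/n$, whence the Frobenius and operator norms of $B$ are also at most $\phiinf^2\trace/n$. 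Conditioning on the design and using \ref{subgaussass}, the Hanson--Wright inequality yields $\eps^\top B\eps\le\C\g^2(1+\x)\tr{B}$ on a conditional event of probability at least $1-e^{-\x}$. Multiplying the two estimates, $\sup_t\abs{(\fhat-\frho)(\tXt)}^2\le\C\g^2(1+\x)\trace^2/n$, i.e.\ $\sup_t\abs{(\fhat-\frho)(\tXt)}\le(1+\x)\C\sqrt{\trace^2/n}$, on an event of probability $1-e^{-\x}$ once the Hanson--Wright event is merged with the design-regularity event of \ref{lemmadesign} (calibrated against $\x$ exactly as in the proofs of \ref{varianceHK} and \ref{variancebarHK}). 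Since $\trace\asymp\rho^{-1/b}$ under \ref{polyeigen}, $\trace^2/n\asymp\trace/(n\rho^{1/b})$, which is the asserted $\Vn$; the ``moreover'' statement follows from the crude estimate $\trace\le\C\rho^{-1/b}$.

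The step that needs care, and the reason a cruder argument does not suffice, is the choice of weight in the Cauchy--Schwarz split. Bounding instead $\abs{(\fhat-\frho)(\tXt)}\le\Hknorm{\fhat-\frho}\sqrt{k(\tXt,\tXt)}$ and plugging in \ref{varianceHK} puts all of the $\rho^{-1}$ weight on the noise factor (giving $\HknormSq{\fhat-\frho}\lesssim\trace/(n\rho)$ against the $O(1)$ bound $k(\tXt,\tXt)\le\phiinf^2\sumj\mu_j$), and yields only $\Vn\asymp\sqrt{\trace/(n\rho)}$, which for $b>1$ is strictly larger than the claimed $\sqrt{\trace/(n\rho^{1/b})}$; redistributing the weight as above trades a larger deterministic factor $\phiinf^2\trace$ for a noise factor of only $\trace/n$, whose product is smaller by a factor $\rho^{1-1/b}<1$ and gives the sharper rate. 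The remaining work is the routine bookkeeping of the two low-probability events so the final bound attains the clean $e^{-\x}$ form required by \ref{varianceass}.
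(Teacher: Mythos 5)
Your proof is correct and gives the same rate, but it takes a genuinely different route from the paper's. The paper proves \ref{varianceinf} modularly: it combines the known $L_2$-norm variance bound $\normSq{\frho-\fhat} \lesssim \x\t\g^2\trace/n$ with the RKHS-norm bound of \ref{varianceHK}, $\HknormSq{\frho-\fhat} \lesssim \x\t\g^2\trace/(n\rho)$, through the interpolation inequality \ref{mendelson}, $\infnorm{f}\le\C\Hknorm{f}^{1/b}\norm{f}^{1-1/b}$; a one-line exponent computation then yields $\Vn\asymp\sqrt{\trace/(n\rho^{1/b})}$, and the crude bound $\trace\le\C\rho^{-1/b}$ gives the ``moreover'' part. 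You instead bound the pointwise evaluation directly via Cauchy--Schwarz in the $\DrhoSqInv$-metric, splitting $(\fhat-\frho)(\tXt)=\psi(\tXt)^\top\DrhoSqInv\tfrac1n\Psit\eps$ into a deterministic query factor $\psi(\tXt)^\top\inv\Frho\psi(\tXt)\le\phiinf^2\trace$ (uniform in $t$, no union bound) and a $t$-independent noise quadratic form $\eps^\top B\eps$ with $\tr{B}\lesssim\trace/n$, handled by Hanson--Wright. This gives $\Vn\asymp\sqrt{\trace^2/n}$, which coincides with the paper's expression since $\trace\asymp\rho^{-1/b}$ under \ref{polyeigen} (imposed throughout the section). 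Both arguments are union-bound-free in $t$ and hence carry no $\log T$. Your route is more self-contained — it avoids both the external $L_2$-variance bound and the Mendelson interpolation lemma — and your remark on the weight allocation explains precisely where the factor $\rho^{1-1/b}$ is gained over the naive $\Hknorm{\cdot}\sqrt{k(\tXt,\tXt)}$ split; the paper's route is shorter given that \ref{mendelson}, \ref{varianceHK}, and the $L_2$ bound are all needed elsewhere anyway. One small bookkeeping note: to land on exactly the $(1+\x)\Vn$ form with probability $1-e^{-\x}$ required by \ref{varianceass}, you still need to merge the Hanson--Wright event with the design-regularity event of \ref{lemmadesign} and rescale constants, as the paper does implicitly by setting $\t=\x$ in \ref{varianceHK}; you flag this correctly as routine.
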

\begin{proof}
  For $L_2(\X, \pi)$-norm it is established in \citep{avanesov2020} that 
  for all $\x>0$, $\t>2.6$
  \begin{equation}
    \Prob{\normSq{\frho-\fhat} \le \C\x\t \g^2  \frac{\trace}{n} } \ge 1-e^{-\x} - e^{-\t/2}.
   \end{equation}
   Combining this bound with the claim of \ref{varianceHK} by the means of \ref{mendelson} yields the first line of the claim. The second line follows from the well-known fact  
   \begin{equation}
     \trace \le \C \rho^{-1/b}.
   \end{equation}
   See any of \citep{Zhang2015, Yang2017a, Spokoiny2019,avanesov2020} for the proof.
\end{proof}

\begin{proof}{\bf{of \ref{consistency}}.}
  The bound for RKHS-norm follows from \ref{variancebarHK} and \ref{biaslemma} via the triangle inequality. The sup-norm bound follows from combining the bounds in $\Ltwo$- and RKHS-norm by the means of \ref{mendelson}.
\end{proof}

\section{Tools}
This section briefly cites the results we relied upon. First of all, we need a central limit theorem.
\begin{lemma}[Corollary of the main theorem by \cite{Chernozhukov2017}]\label{cltlemma}
  Consider centered i.i.d. $Z_1$, $Z_2$, ..., $Z_n\in \R^T$ with a covariance matrix $\Sigma$. 
  Let there be $B_n>0$ s.t. $\forall j\in[T]\text{, } \forall k \in \cbrac{1,2}$
  \begin{equation}
    \E{\abs{Z_{i,j}}^{2+k}} \le B_n^k,
  \end{equation}
  \begin{equation}
    \E{\exp\brac{\abs{Z_{i,j}}/B_n}} \le 2
  \end{equation}
  and 
  \begin{equation}
    \E{Z_{i,j}^{2}} > \c.
  \end{equation}
  Then for $\gamma \sim \Normal{0}{\Sigma}$
  \begin{equation}
    \sup_{l,u \in \R^T}\abs{\Prob{l < n^{-1/2} \sum_{i=1}^n Z_i < u} - \Prob{l < \gamma < u}} \le \C \brac{\frac{B_n^2 \log^7(Tn)}{n}}^{1/6}.
  \end{equation}
\end{lemma}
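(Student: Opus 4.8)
The statement is framed, correctly, as a corollary: its entire content lies in the high-dimensional Gaussian approximation theorem over hyperrectangles of \cite{Chernozhukov2017}, and the plan is simply to match the hypotheses. First I would note that the sets appearing on the left-hand side, the open axis-aligned boxes $\cbrac{x \in \R^T : l < x < u}$ with $l,u \in \R^T$, belong to (or are approximated from inside and outside by) the class of hyperrectangles in $\R^T$ for which that reference proves its bound; hence the supremum in the claim is dominated by the supremum over all hyperrectangles, and it suffices to control the latter. Since $Z_1,\dots,Z_n$ are centered and i.i.d.\ with covariance $\Sigma$, the relevant Gaussian limit $\gamma \sim \Normal{0}{\Sigma}$ in the reference is taken with exactly this $\Sigma = \E{Z_1 Z_1^\top}$, so no auxiliary covariance-comparison term arises.

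Second, I would check that the assumptions of the lemma are verbatim the moment and nondegeneracy conditions the reference requires. The bounds $\E{\abs{Z_{i,j}}^{2+k}} \le B_n^k$ for $k\in\cbrac{1,2}$ together with $\E{\exp\brac{\abs{Z_{i,j}}/B_n}} \le 2$ are precisely the componentwise third/fourth-moment and sub-exponential-tail conditions (condition (E.1) there), with $B_n$ playing the role of the envelope constant, and $\E{Z_{i,j}^2} > \c$ is the uniform lower bound on the variances (condition (E.2)) with constant $\c$. Invoking the theorem then yields, for $\gamma \sim \Normal{0}{\Sigma}$,
\begin{equation}
  \sup_{l,u \in \R^T}\abs{\Prob{l < n^{-1/2}\sum_{i=1}^n Z_i < u} - \Prob{l < \gamma < u}} \le \C\brac{\frac{B_n^2 \log^7(Tn)}{n}}^{1/6},
\end{equation}
where the dimension $T$ and the sample size $n$ enter through the combined factor $\log(Tn)$, as in the reference (their $p$ being our $T$). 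One small bookkeeping point: the inequality is only informative when $B_n^2\log^7(Tn)/n$ is small, and when it is $\ge 1$ the right-hand side already exceeds $1$ once the absolute constant is taken $\ge 1$, so the bound is trivial there; likewise any mild universal side conditions (such as $T\ge 2$ or $n$ exceeding an absolute constant) are absorbed into $\C$.

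There is no genuine obstacle here. The only points that demand a little care are (i) citing the correct variant of \cite{Chernozhukov2017}, namely the hyperrectangle version with exponent $1/6$ and the $\log^7$ power of $Tn$, rather than the sharper variants that impose bounded components or a spectral lower bound on $\Sigma$; and (ii) confirming that the two-sided box class used in the lemma is indeed covered by (a subclass of) the rectangle class in that reference, so that restricting the supremum only decreases it. Both are immediate once the hypotheses are matched as in the second step.
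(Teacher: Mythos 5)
Your proposal is correct and takes the same route as the paper: the paper offers no proof of its own, labelling the lemma explicitly as a corollary of the main theorem of \cite{Chernozhukov2017}, and you have simply verified that the hypotheses (the componentwise moment and sub-exponential bounds with envelope $B_n$, and the uniform variance lower bound) are exactly the conditions of that theorem, with the two-sided boxes forming a subclass of the hyperrectangles treated there. Nothing further is needed.
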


The next lemma may be obtained by applying the modification employed in the proof of Theorem 4.1 in \citep{Chernozhukov2017} to the proof of Theorem 2 in \citep{comparison}.
\begin{lemma}\label{comparison1}
  Consider a pair of independent centered Gaussian vectors $X,Y \in \R^T $ with covariance matrices $\Sigma_1$ and $\Sigma_2$ respectively. Let 
  \begin{equation}
    \infnorm{\Sigma_1 - \Sigma_2} \le \delta.
  \end{equation}
  Then uniformly over vectors $l, u \in \R^T$ we have
  \begin{equation}
    \abs{\Prob{l < X < u} - \Prob{l < Y < u}} \le B\delta^{1/3}\max\cbrac{1, \log (T/\delta)}^{2/3},
  \end{equation}
  where $B$ depends only on $\max_{i}\Sigma_{1,ii}$ and $ \min_{i} \Sigma_{1,ii}$.
\end{lemma}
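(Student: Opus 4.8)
The plan is to reduce the two-sided (hyperrectangle) comparison to a one-sided (orthant) one and then run the Slepian-type interpolation argument of \cite{comparison} (their Theorem~2), with the cosmetic ``coordinate-doubling'' modification used in the proof of Theorem~4.1 of \cite{Chernozhukov2017}. Introduce the $2T$-dimensional centered Gaussian vectors $\widetilde X \coloneqq \brac{X_1,\dots,X_T,-X_1,\dots,-X_T}$ and $\widetilde Y$ defined in the same way, together with $c\coloneqq\brac{u_1,\dots,u_T,-l_1,\dots,-l_T}\in\R^{2T}$. Then, using the paper's componentwise convention, $\cbrac{l<X<u}=\cbrac{\widetilde X<c}=\cbrac{\max_{k\in[2T]}\brac{\widetilde X_k-c_k}<0}$ up to a boundary of zero probability, which is harmless since $\min_i\Sigma_{1,ii}>0$; likewise for $Y$. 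The entries of the covariance matrix $\widetilde\Sigma_1$ of $\widetilde X$ are signed copies of the entries of $\Sigma_1$, so $\infnorm{\widetilde\Sigma_1-\widetilde\Sigma_2}=\infnorm{\Sigma_1-\Sigma_2}\le\delta$, while its diagonal merely repeats that of $\Sigma_1$, so $\min_k\widetilde\Sigma_{1,kk}=\min_i\Sigma_{1,ii}$ and $\max_k\widetilde\Sigma_{1,kk}=\max_i\Sigma_{1,ii}$. Hence it suffices to bound $\abs{\Prob{\widetilde X<c}-\Prob{\widetilde Y<c}}$ uniformly in $c\in\R^{2T}$.

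For the latter I would follow the smart-path argument. Fix $\beta,\eta>0$, let $Z(t)\coloneqq\sqrt t\,\widetilde X+\sqrt{1-t}\,\widetilde Y$ for $t\in[0,1]$, let $m_\beta(z)\coloneqq\beta^{-1}\log\sum_{k\in[2T]}e^{\beta z_k}$ be the soft-max (so $0\le m_\beta(z)-\max_k z_k\le\beta^{-1}\log(2T)$), and let $g\colon\R\to[0,1]$ be thrice continuously differentiable with $g\equiv1$ on $(-\infty,0]$, $g\equiv0$ on $[\eta,\infty)$ and $\abs{g^{(j)}}\le\C\eta^{-j}$ for $j\in\cbrac{1,2,3}$. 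Writing $h(z)\coloneqq g\brac{m_\beta(z-c)}$ and differentiating $t\mapsto\E{h(Z(t))}$, Gaussian integration by parts (Stein's identity) gives $\tfrac{d}{dt}\E{h(Z(t))}=\tfrac12\sum_{j,k}\brac{\widetilde\Sigma_{1,jk}-\widetilde\Sigma_{2,jk}}\E{\partial_j\partial_k h(Z(t))}$; since $\sum_j\abs{\partial_j m_\beta}\le1$ and $\sum_{j,k}\abs{\partial_j\partial_k m_\beta}\le2\beta$ \emph{uniformly} — the fixed shift $c$ affecting none of these bounds, which is exactly the point of the modification — one obtains $\abs{\Prob{\widetilde X<c}-\Prob{\widetilde Y<c}}\le\C\delta\brac{\eta^{-2}+\beta\eta^{-1}}$ plus the errors of replacing $\max_k$ by $m_\beta$ and the indicator by $g$. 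Those last errors are controlled by Gaussian anti-concentration (for $\max_k$ of a centered Gaussian in dimension $2T$, with variances in the fixed range $[\min_i\Sigma_{1,ii},\max_i\Sigma_{1,ii}]$), and optimizing over $\beta$ and $\eta$ produces the announced rate $\delta^{1/3}\max\cbrac{1,\log(2T/\delta)}^{2/3}$.

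It then remains to collect constants. The constant supplied by \cite{comparison} for the dimension-$2T$ max-comparison depends only on the largest and smallest diagonal entries of the covariance of $\widetilde X$, i.e. on $\max_i\Sigma_{1,ii}$ and $\min_i\Sigma_{1,ii}$, which is precisely the dependence asserted in the statement (here positivity of $\min_i\Sigma_{1,ii}$ is used); and $\max\cbrac{1,\log(2T/\delta)}\le(1+\log 2)\max\cbrac{1,\log(T/\delta)}$, so the $\log 2$ is swallowed into $B$. The only genuine difficulty is a bookkeeping one: verifying that the interpolation machinery of \cite{comparison}, written for $\max_k Z_k$, transfers \emph{verbatim} to the shifted functional $\max_k(Z_k-c_k)$ of the doubled vector — with the logarithmic factors now carrying dimension $2T$ and with no loss in the variance-dependent constant; once that is checked, the estimate is immediate.
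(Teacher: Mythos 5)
Your proposal is correct and follows precisely the route the paper relies on: the paper's own proof of this lemma is a one-line remark that it follows by applying the coordinate-doubling modification used in the proof of Theorem~4.1 of \cite{Chernozhukov2017} to the Slepian-interpolation proof of Theorem~2 in \cite{comparison}, which is exactly the reduction $\widetilde X = (X,-X)$, $c=(u,-l)$ plus the soft-max/smart-path argument that you spell out. Your verification that the doubled covariance preserves both $\infnorm{\Sigma_1-\Sigma_2}$ and the extremal diagonal entries, and that the shift by $c$ does not affect the bounds on the derivatives of $g\circ m_\beta$, are the only nontrivial bookkeeping points, and you address both correctly.
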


Now we cite the anti-concentration for multivariate Gaussian vectors.
\begin{lemma}[Nazarov’s inequality \citep{nazarov}, Theorem 1 \citep{Chernozhukov2017}]\label{nazarov}
  Consider a centered $p$-dimensional Gaussian vector $X \sim \Normal{0}{\Sigma}$. Then for any deterministic $\delta>0$
  \begin{equation}
    \sup_{y\in\R^p} \brac{\Prob{X < y + \delta} - \Prob{X < y}} \le \frac{\delta}{\sqrt{\min_{i}\Sigma_{ii}}}\brac{\sqrt{2\log p}+2}.
  \end{equation}
\end{lemma}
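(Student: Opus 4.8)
This is Nazarov's classical Gaussian anti-concentration inequality, so the natural route is to invoke it directly in the form recorded by \citep{Chernozhukov2017}; were one to reconstruct the argument, the plan would proceed in three moves. First I would reduce to unit variances: writing $\sigma_i^2 = \Sigma_{ii}$ and $Z_i = X_i/\sigma_i$, the event $\{X_i < y_i + t\}$ becomes $\{Z_i < a_i + t/\sigma_i\}$ with $a_i = y_i/\sigma_i$, and since $t/\sigma_i \le t/\sqrt{\min_i\Sigma_{ii}}$ while $y \mapsto \Prob{Z < y}$ is coordinatewise nondecreasing, it suffices to prove, for a centered Gaussian $Z$ with unit variances, every $a \in \R^p$ and $\epsilon := \delta/\sqrt{\min_i\Sigma_{ii}}$, that
\[
  \Prob{Z < a + \epsilon\mathbf{1}} - \Prob{Z < a} \le \epsilon\brac{\sqrt{2\log p}+2}.
\]

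Second I would write the left-hand side as $\int_0^\epsilon \frac{d}{dt}\Prob{Z < a + t\mathbf{1}}\,dt$ and identify the integrand with the Gaussian surface measure of the boundary of the box $\{z < a + t\mathbf{1}\}$, namely $\sum_{j=1}^p \varphi(a_j+t)\,\Prob{Z_k < a_k + t\ \forall k \ne j \mid Z_j = a_j + t}$, with $\varphi$ the standard normal density. Equivalently, this is the density at $t$ of $W := \max_j(Z_j - a_j)$, so the target reads $\Prob{0 < W \le \epsilon} \le \epsilon(\sqrt{2\log p}+2)$, and it is enough to bound the box perimeter above by $\sqrt{2\log p}+2$ uniformly in $t$ and then integrate.

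Third — and this is the crux — I would bound that perimeter. The faces with large threshold are cheap: the total contribution of coordinates with $a_j + t > \sqrt{2\log p}$ is at most $p\,\varphi(\sqrt{2\log p}) = (2\pi)^{-1/2} \le 1$. The subtlety lies entirely in the coordinates with $a_j + t \le \sqrt{2\log p}$, where one exploits log-concavity of the Gaussian measure together with a careful estimate of the conditional box probabilities — this is precisely Nazarov's original argument; equivalently, one absorbs the far and deep coordinates into an $O(\epsilon)$ remainder and controls the density of $\max_j(Z_j - a_j)$ through $\E{\max_j(Z_j - a_j)} \lesssim \sqrt{2\log p}$. Summing the two contributions, integrating over $t \in [0,\epsilon]$, and undoing the rescaling to reinstate the factor $1/\sqrt{\min_i\Sigma_{ii}}$ yields the claim. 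The main obstacle is exactly this near-coordinate perimeter estimate; everything else is rescaling and elementary Gaussian tail bookkeeping.
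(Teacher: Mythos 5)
This lemma is cited as a black-box tool in the paper's Tools appendix (Nazarov; Theorem~1 of Chernozhukov--Chetverikov--Kato 2017) and is not proved there, so your decision to ``invoke it directly'' matches the paper's treatment exactly; the comparison can only concern your reconstruction sketch. That sketch is structurally sound up to the final step: the reduction to unit variances, the identification of $\frac{d}{dt}\Prob{Z < a + t\mathbf{1}}$ with the Gaussian perimeter of the shifted box (equivalently the density of $W = \max_j(Z_j - a_j)$ at $t$), and the far-face bound $\sum_{j:\,a_j+t>\sqrt{2\log p}}\varphi(a_j+t) \le p\,\varphi\brac{\sqrt{2\log p}} = (2\pi)^{-1/2} \le 1$ are all correct.

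The gap is exactly where you place it and you do not fill it: the estimate that the near-threshold faces (those with $a_j + t \le \sqrt{2\log p}$) contribute at most $\sqrt{2\log p}+1$ to the perimeter, uniformly in $t$ and $a$, \emph{is} the theorem, and you defer it to ``Nazarov's original argument'' without reproducing the log-concavity and conditional-box-probability computation that makes it work. Your proposed shortcut, controlling the density of $W$ via $\E{W} \lesssim \sqrt{2\log p}$, does not close this gap either: a bound on the expectation of a Gaussian maximum does not by itself bound its density (Borell--TIS concentration gives sub-Gaussian tails of width $O(1)$, not a pointwise density bound), and extracting a density bound of order $\E{W}+O(1)$ is again the substance of Nazarov's inequality, not a route around it. As a roadmap your proposal is faithful to the known proof; as a proof it reduces the statement to the cited result, which is acceptable here precisely because the paper itself treats the lemma as a citation rather than something to be established.
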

We also need to control the bias of KRR with respect to $L_2(\X,\pi)$-norm and RKHS-norm.
\begin{lemma}[Theorem 4 \citep{smale04}]\label{biaslemma}
  Let \ref{rass} hold for $\fstar$ with $r\in (1/2,1]$ and $h\in \Ltwo$. Then 
  \begin{equation}
    \norm{\ftruerho - \fstar} \le \rho^{r} \norm{h}
  \end{equation}
  and
  \begin{equation}
    \Hknorm{\ftruerho - \fstar} \le \rho^{r-1/2} \norm{h}.
  \end{equation}
\end{lemma}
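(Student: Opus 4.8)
The plan is to reduce the statement to an elementary inequality about the eigenvalues $\mu_j$ of $W_k$. First I would write the population regularized solution in closed form. Expanding in the orthonormal eigenbasis $\cbrac{\phi_j}$ and writing $f_j \coloneqq \dotprod{\fstar}{\phi_j}$, the penalised population risk $\frac12\normSq{f-\fstar}+\frac{\rho}{2}\HknormSq{f}$ is separable in the coordinates $\dotprod{f}{\phi_j}$ and strictly convex, so its minimiser is given coordinate-wise by $\dotprod{\ftruerho}{\phi_j}=\frac{\mu_j}{\mu_j+\rho}f_j$; equivalently $\ftruerho=\inv{\brac{W_k+\rho I}}W_k\fstar$. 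Hence the bias admits the spectral expansion
\begin{equation}
  \ftruerho-\fstar=-\sumj\frac{\rho}{\mu_j+\rho}\,f_j\,\phi_j .
\end{equation}
(If one insists on reading $\ftruerho$ as $\E{\fhat}$ rather than this population object, the two are identified in the standard way under the design-regularity Assumption \ref{designass}, exactly as in the surrounding analysis.)

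Next I would insert the source condition. Assumption \ref{rass} gives $\fstar=W_k^r(h)$, so with $h_j\coloneqq\dotprod{h}{\phi_j}$ we have $f_j=\mu_j^r h_j$ and therefore
\begin{equation}
  \ftruerho-\fstar=-\sumj\frac{\rho\,\mu_j^r}{\mu_j+\rho}\,h_j\,\phi_j .
\end{equation}
Taking the $\Ltwo$-norm, and the $\H$-norm (which weights the $j$-th squared coefficient by $\mu_j^{-1}$), reduces both claims to the coordinate-wise bounds
\begin{equation}
  \frac{\rho^2\mu_j^{2r}}{(\mu_j+\rho)^2}\le\rho^{2r}\qquad\text{and}\qquad\frac{\rho^2\mu_j^{2r-1}}{(\mu_j+\rho)^2}\le\rho^{2r-1},
\end{equation}
which, summed against $h_j^2$, yield $\normSq{\ftruerho-\fstar}\le\rho^{2r}\normSq{h}$ and $\HknormSq{\ftruerho-\fstar}\le\rho^{2r-1}\normSq{h}$, i.e. the two displayed inequalities of the lemma.

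Finally, both coordinate-wise bounds follow from the elementary fact that $\mu_j^{a}\rho^{b}\le(\mu_j+\rho)^{a+b}$ for every $a,b\ge0$, which holds because $\mu_j\le\mu_j+\rho$ and $\rho\le\mu_j+\rho$. For the $\Ltwo$ bound I apply it with $a=2r$, $b=2-2r$ (nonnegative since $r\le1$), giving $\rho^{2-2r}\mu_j^{2r}\le(\mu_j+\rho)^2$; for the $\H$ bound with $a=2r-1$, $b=3-2r$, giving $\rho^{3-2r}\mu_j^{2r-1}\le(\mu_j+\rho)^2$. The latter needs $a=2r-1\ge0$, which is precisely where the hypothesis $r\in(1/2,1]$ is used (for $r=1/2$ the RKHS claim still holds and degenerates to $\Hknorm{\ftruerho-\fstar}\le\norm{h}=\Hknorm{\fstar}$). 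I do not expect a genuine obstacle here: the only point needing a line of justification is the legitimacy of minimising the population risk coordinate-by-coordinate, which is immediate from convexity and separability of the objective in $\dotprod{f}{\phi_j}$.
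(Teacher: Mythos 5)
The paper does not give a proof of this lemma; it is imported verbatim as Theorem 4 of Smale and Zhou (2004), so there is no ``paper proof'' for your argument to be compared against. What you have written is, in substance, the standard spectral-calculus argument that underlies that cited theorem, and it is correct: the coordinate-wise closed form $\dotprod{\ftruerho}{\phi_j}=\frac{\mu_j}{\mu_j+\rho}\dotprod{\fstar}{\phi_j}$, the substitution $f_j=\mu_j^r h_j$, and the elementary inequality $\mu_j^{a}\rho^{b}\le(\mu_j+\rho)^{a+b}$ for $a,b\ge0$ applied with $(a,b)=(2r,2-2r)$ and $(a,b)=(2r-1,3-2r)$ all check out, and your use of $r\in(1/2,1]$ to ensure nonnegativity of the exponents is exactly right.

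One small caveat worth flagging explicitly, which you anticipated but slightly understated: your closed form is the population Tikhonov regularizer $\ftruerho=\inv{\brac{W_k+\rho I}}W_k\fstar$ (equivalently, the $\trueparamrho$ of \eqref{thetastarrhodef}), whereas the general-theory section defines $\ftruerho\coloneqq\E{\fhat}$. For KRR these are not equal in general, because $\E{\fhat\given X}=\inv{(\hat\Sigma+\rho I)}\hat\Sigma\,\fstar$ with $\hat\Sigma$ the empirical covariance, and expectation over the design does not commute with the inverse. The paper itself elides this distinction (Assumption~\ref{designass} and \ref{lemmadesign} are meant to control the resulting discrepancy), so it is not a defect of your proposal, but it would be cleaner to state that the lemma, as cited, bounds the regularization error of the population regularizer, and that identifying it with $\E{\fhat}$ requires a separate argument of the type the paper handles via design regularity.
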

Next we cite the result connecting the sup-norm, $L_2(\X,\pi)$-norm and RKHS-norm.
\begin{lemma}[Lemma 5.1 \citep{Mendelson2010}]\label{mendelson}
  Under \ref{polyeigen} and \ref{eigenbound} for all $f \in \H$
  \begin{equation}
    \infnorm{f} \le \C \Hknorm{f}^{1/b}\norm{f}^{1-1/b}.
  \end{equation}
\end{lemma}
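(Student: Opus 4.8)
The plan is to run the classical interpolation argument between $L_2(\X,\pi)$ and the RKHS via the eigenbasis. Expand $f\in\H$ as $f=\sumj a_j\phi_j$ with $a_j\coloneqq\dotprod{f}{\phi_j}$, so that $\normSq{f}=\sumj a_j^2$ and $\HknormSq{f}=\sumj a_j^2/\mu_j$ by \eqref{rkhsnormdef}. By the triangle inequality and \ref{eigenbound},
\begin{equation}
  \infnorm{f}\le\sumj\abs{a_j}\infnorm{\phi_j}\le\phiinf\sumj\abs{a_j},
\end{equation}
so it suffices to bound $\sumj\abs{a_j}$, and this is the quantity we will split by frequency.

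Next I would fix an integer threshold $J\ge1$ and estimate the two pieces separately. For the low frequencies, Cauchy--Schwarz gives $\sum_{j\le J}\abs{a_j}\le\sqrt{J}\,\norm{f}$. For the high frequencies, writing $\abs{a_j}=\brac{\abs{a_j}/\sqrt{\mu_j}}\sqrt{\mu_j}$ and applying Cauchy--Schwarz again yields $\sum_{j>J}\abs{a_j}\le\Hknorm{f}\brac{\sum_{j>J}\mu_j}^{1/2}$. Here I invoke \ref{polyeigen}: since $\mu_j\asymp j^{-b}$ with $b>1$, comparison with the integral $\int_J^\infty x^{-b}\,dx$ gives $\sum_{j>J}\mu_j\le\C J^{1-b}$, hence $\sum_{j>J}\abs{a_j}\le\C J^{(1-b)/2}\Hknorm{f}$. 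Combining the two bounds,
\begin{equation}
  \infnorm{f}\le\phiinf\brac{\sqrt{J}\,\norm{f}+\C J^{(1-b)/2}\Hknorm{f}}.
\end{equation}
The final step is to optimize in $J$: the two terms balance when $J\asymp\brac{\Hknorm{f}/\norm{f}}^{2/b}$, and for such $J$ one has $\sqrt{J}\,\norm{f}\asymp\Hknorm{f}^{1/b}\norm{f}^{1-1/b}$, which is the asserted estimate.

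The only genuine technical points are bookkeeping ones. First, the optimizing $J$ need not be an integer, so one takes $J=\lceil\brac{\Hknorm{f}/\norm{f}}^{2/b}\rceil$ and checks that rounding up costs only a further multiplicative constant (absorbed into $\C$). Second, one must handle the degenerate case $\Hknorm{f}/\norm{f}<1$, where the optimal $J$ would be below $1$; here one simply uses $J=1$ and notes that $\Hknorm{f}\ge\mu_1^{-1/2}\norm{f}$ (because $\mu_j\le\mu_1<\infty$ under \ref{polyeigen}), so the bound $\norm{f}\le\C\Hknorm{f}^{1/b}\norm{f}^{1-1/b}$ still follows with an adjusted constant. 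Neither point is deep; the crux of the argument is the eigendecay-driven tail bound $\sum_{j>J}\mu_j\le\C J^{1-b}$, which is exactly where $b>1$ is used.
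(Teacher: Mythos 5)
Your argument is correct, but note that the paper itself does not prove \ref{mendelson}: the lemma is stated in the ``Tools'' appendix and quoted verbatim as Lemma~5.1 of \citep{Mendelson2010}, so there is no in-paper proof to compare against. Your truncation/interpolation argument is the standard one underlying such embedding results, and it does exactly what is needed: expand $f$ in the Mercer eigenbasis, bound the low-frequency block $\sum_{j\le J}\abs{a_j}$ by $\sqrt{J}\,\norm{f}$ via Cauchy--Schwarz, bound the tail by $\Hknorm{f}\brac{\sum_{j>J}\mu_j}^{1/2}\le\C J^{(1-b)/2}\Hknorm{f}$ using \ref{polyeigen} (the single place where $b>1$ is essential), pass from the $\ell_1$-bound on coefficients to the sup-norm via \ref{eigenbound}, and optimize $J\asymp\brac{\Hknorm{f}/\norm{f}}^{2/b}$. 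Your handling of rounding and of the degenerate regime $\Hknorm{f}<\norm{f}$ via $\Hknorm{f}\ge\mu_1^{-1/2}\norm{f}$ is also sound. The one point worth making explicit is that the expansion $f(x)=\sum_j a_j\phi_j(x)$ holds a priori only in $L_2(\X,\pi)$; the finiteness of the $\ell_1$-sum of coefficients that you go on to establish is precisely what promotes it to a uniformly and absolutely convergent pointwise identity (and $f$, lying in a Mercer RKHS on a compact $\X$, is continuous), so the step is legitimate but should be phrased as \emph{establishing} rather than merely \emph{exploiting} the pointwise convergence.
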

Combining \ref{mendelson} and \ref{biaslemma} yields
\begin{lemma}\label{biaslemmainf}
  Under \ref{polyeigen} and \ref{eigenbound} let \ref{rass} hold for $\fstar$ with $r\in(1/2,1]$ and $h\in \Ltwo$. Then 
  \begin{equation}
    \infnorm{\fstar - \ftruerho} \le \C \rho^{r-1/(2b)}\norm{h}.
  \end{equation}
\end{lemma}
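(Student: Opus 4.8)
The plan is to invoke the interpolation inequality of Lemma \ref{mendelson} with the choice $f = \fstar - \ftruerho$, which is legitimate precisely because \ref{rass} with $r > 1/2$ guarantees $\fstar \in \H$ and $\ftruerho \in \H$, so their difference lies in $\H$ as well. This reduces the sup-norm bound to controlling the $L_2(\X,\pi)$-norm and the RKHS-norm of the bias term separately.

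Next I would apply Lemma \ref{biaslemma}, which under \ref{rass} supplies exactly the two pieces we need: $\norm{\ftruerho - \fstar} \le \rho^r \norm{h}$ and $\Hknorm{\ftruerho - \fstar} \le \rho^{r-1/2}\norm{h}$. Substituting these into the conclusion of Lemma \ref{mendelson} gives
\begin{equation}
  \infnorm{\fstar - \ftruerho} \le \C \brac{\rho^{r-1/2}\norm{h}}^{1/b} \brac{\rho^r \norm{h}}^{1-1/b} = \C \rho^{(r-1/2)/b + r(1-1/b)} \norm{h}.
\end{equation}
A short exponent computation, $(r - 1/2)/b + r - r/b = r - 1/(2b)$, collapses this to $\C \rho^{r-1/(2b)}\norm{h}$, which is the claim.

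There is essentially no obstacle here — the lemma is a one-line composition of two cited results — so the only thing to be careful about is the bookkeeping of the exponent of $\rho$ and making sure the hypotheses of both cited lemmas (\ref{polyeigen}, \ref{eigenbound} for the interpolation inequality; \ref{rass} with $r \in (1/2,1]$ and $h \in \Ltwo$ for the bias bound) are in force, which they are by assumption.
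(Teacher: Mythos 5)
Your proposal is correct and is exactly the paper's argument: the paper introduces Lemma \ref{biaslemmainf} with the line ``Combining \ref{mendelson} and \ref{biaslemma} yields,'' which is precisely the interpolation step you carry out, and your exponent bookkeeping $(r-1/2)/b + r(1-1/b) = r - 1/(2b)$ is right.
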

The next trivial lemma quantifies how much a measure of a set changes after conditioning. 
\begin{lemma}\label{condlemma}
  Consider a measure $\mathbb{P}$ and two measurable sets $A$ and $B$. Then 
  \begin{equation}
    \abs{\Prob{A} - \Prob{A|B}} \le 2 \Prob{\bar B}.
  \end{equation}
\end{lemma}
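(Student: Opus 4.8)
The plan is elementary: reduce the conditional probability to unconditional ones and split the event $A$ according to whether $B$ occurs. Concretely, I would first assume $\Prob{B} > 0$, so that $\Prob{A \given B} = \Prob{A \cap B}/\Prob{B}$ is well-defined. Using the decomposition $\Prob{A} = \Prob{A \cap B} + \Prob{A \cap \bar B}$ one rewrites
\begin{equation}
  \Prob{A} - \Prob{A \given B} = \Prob{A \cap \bar B} - \Prob{A \cap B}\,\frac{\Prob{\bar B}}{\Prob{B}}.
\end{equation}

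Next I would bound the two terms on the right separately: the first lies in $[0,\Prob{\bar B}]$ because $A \cap \bar B \subseteq \bar B$, and the second is non-negative and at most $\Prob{\bar B}$ since $\Prob{A \cap B} \le \Prob{B}$. Consequently the whole difference lies in $[-\Prob{\bar B},\Prob{\bar B}]$, which already yields $\abs{\Prob{A} - \Prob{A \given B}} \le \Prob{\bar B} \le 2\Prob{\bar B}$. The only remaining point is the degenerate case $\Prob{B} = 0$, where $\Prob{A \given B}$ is either left undefined (so the claim is vacuous) or fixed by convention to some value in $[0,1]$; either way $\Prob{\bar B} = 1$, the right-hand side equals $2$, and the inequality is trivial.

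There is essentially no obstacle here — the argument is a single identity plus two monotonicity estimates, and the factor $2$ in the statement is comfortable slack, since a factor $1$ in fact suffices whenever $\Prob{B} > 0$. The only things worth a moment of care are the sign bookkeeping in the displayed identity and isolating the degenerate case.
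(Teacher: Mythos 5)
Your proposal is correct and follows essentially the same route as the paper: decompose $\Prob{A}$ over the partition $\{B,\bar B\}$ (you via $\Prob{A\cap B}+\Prob{A\cap\bar B}$, the paper via $\Prob{A|B}\Prob{B}+\Prob{A|\bar B}\Prob{\bar B}$, which is the same identity) and then bound the resulting difference by elementary monotonicity. Your observation that the constant $1$ already suffices is accurate — the paper's factor $2$ comes from a lazy triangle inequality — but this is slack the paper never needs.
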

\begin{proof}
  \begin{equation}
  \begin{split}
    \abs{\Prob{A} - \Prob{A|B}} &= \abs{\Prob{A|B}\Prob{B} + \Prob{A|\bar B}\Prob{\bar B} - \Prob{A|B} } \\
    &= \abs{\Prob{A|B}(\Prob{B}-1) + \Prob{A|\bar B}\Prob{\bar B} } \\
    &\le 2\Prob{\bar B}.
  \end{split}
  \end{equation}
\end{proof}
Further we cite a result which provides sufficient conditions for high probability of \ref{designass} to hold.
\begin{lemma}[Lemma 10 by \cite{avanesov2020}]\label{lemmadesign}
  Let $\cbrac{\phi_j(\cdot)}_{j=1}^\infty$ and $\cbrac{\mu_j}_{j=1}^\infty$ be eigenfunctions and eigenvalues of $k(\cdot,\cdot)$ w.r.t. $\pi$. 
  Let \ref{eigenbound} hold.
  Then \ref{designass} holds for some $\C > 0$ and arbitrary $\t>2.6$ with 
  \begin{equation}
    \delta =  \C \trace\brac{\sqrt{\frac{\t}{n}} + \frac{\t}{n}}
  \end{equation}
  on a set of probability at least $1-e^{-\t/2}$.
\end{lemma}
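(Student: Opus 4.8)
The statement is a concentration bound in operator norm for the empirical second-moment operator of the feature map, so the natural route is an operator Bernstein inequality. First I would rewrite the quantity inside \ref{designass} as a centered i.i.d.\ average. Let $\psi(x)$ denote the feature vector with components $\psi(x)_j = \sqrt{\mu_j}\phi_j(x)$, so that $\DrhoSq = \frac1n\sumi\psi(X_i)\psi(X_i)^\top + \rho I$, and orthonormality of the $\phi_j$ with respect to $\pi$ gives $\E{\psi(X_i)\psi(X_i)^\top} = M^2$. Setting $\xi_i := \Frho\minusroot\psi(X_i)$ and $\Sigma := \Frho\minusroot M^2 \Frho\minusroot$, the operator in \ref{designass} is exactly $\frac1n\sumi\xi_i\xi_i^\top - \Sigma$. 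Since $\Sigma$ is diagonal with entries $\mu_j/(\mu_j+\rho)\in[0,1)$, one has $\norm\Sigma < 1$ and $\tr\Sigma = \sumj \mu_j/(\mu_j+\rho) \asymp \trace$.

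Second, I would record the deterministic inputs to Bernstein's inequality. By \ref{eigenbound}, $\norm{\xi_i}^2 = \sumj \frac{\mu_j\phi_j(X_i)^2}{\mu_j+\rho} \le \phiinf^2\sumj\frac{\mu_j}{\mu_j+\rho} = \phiinf^2\tr\Sigma \le \C\trace$ almost surely; this also uses $\sum_j \mu_j < \infty$ to guarantee convergence. Consequently $\norm{\xi_i\xi_i^\top - \Sigma} \le \C\trace$ almost surely, and the variance proxy satisfies $\norm{\E{(\xi_i\xi_i^\top - \Sigma)^2}} \le \E{\norm{\xi_i}^2\,\xi_i\xi_i^\top} \le \C\trace\,\norm\Sigma \le \C\trace$. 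Thus both the almost-sure bound and the variance proxy are of order $\trace$.

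Third, I would invoke a Bernstein inequality for i.i.d.\ sums of self-adjoint random operators in its effective-rank form, in which the role of the ambient (here infinite) dimension is taken over by $\tr\Sigma/\norm\Sigma \asymp \trace$; alternatively one truncates to the top eigendirections, applies finite-dimensional matrix Bernstein there, and controls the small-$\mu_j$ blocks of both $\frac1n\sumi\xi_i\xi_i^\top$ and $\Sigma$ deterministically via the decay of $\mu_j/(\mu_j+\rho)$. With the almost-sure bound and variance proxy both $\lesssim \trace$, this yields, for any $\t>2.6$ and on an event of probability at least $1-e^{-\t/2}$,
\begin{equation}
  \norm{\Frho\minusroot\DrhoSq\Frho\minusroot - I} \;=\; \norm{\frac1n\sumi\xi_i\xi_i^\top - \Sigma} \;\le\; \C\brac{\sqrt{\frac{\trace\,\t}{n}} + \frac{\trace\,\t}{n}},
\end{equation}
which is the asserted value of $\delta$ (with the logarithmic-in-effective-rank correction absorbed into $\C$, using that $\trace \le \C\rho^{-1/b}$ is only polynomial).

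I expect the only genuine obstacle to be the passage from finite-dimensional matrix Bernstein to the infinite-dimensional feature space: one must argue that the effective rank $\tr\Sigma/\norm\Sigma \asymp \trace$, rather than the ambient dimension, governs the fluctuation, and that the resulting logarithmic correction is of lower order and can be swallowed by the constant. Everything else — the reduction in the first paragraph and the deterministic estimates in the second — is routine.
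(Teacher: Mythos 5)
The paper never proves \ref{lemmadesign}: it is imported verbatim as Lemma 10 of \cite{avanesov2020}, so there is no internal argument to compare yours against. Judged on its own, your sketch is the natural (and almost certainly the intended) route: your reduction is exact, since with $\xi_i = \brac{M^2+\rho I}^{-1/2}\psi(X_i)$, where $\psi(X_i)_j=\sqrt{\mu_j}\phi_j(X_i)$, orthonormality gives $\E{\psi\psi^\top}=M^2$ and the operator in \ref{designass} is precisely $\frac1n\sum_i\xi_i\xi_i^\top-\Sigma$; the bounds $\norm{\xi_i}^2\le\phiinf^2\trace$ a.s.\ and $\norm{\E{(\xi_i\xi_i^\top-\Sigma)^2}}\le\C\trace\norm{\Sigma}$ are correct consequences of \ref{eigenbound}; and an effective-rank (intrinsic-dimension) operator Bernstein inequality then delivers a bound that is in fact slightly stronger than the stated $\delta$ (your $\sqrt{\trace\,\t/n}$ versus the stated $\trace\sqrt{\t/n}$, once $\trace\ge\c$). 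The one imprecise step is your justification for swallowing the logarithm of the effective rank: the fact that $\trace\le\C\rho^{-1/b}$ is ``only polynomial'' does not make $\log\trace$ a constant (it can grow like $\log n$ as $\rho\to0$, while $\C$ must not depend on $n$ or $\rho$). The correct absorption argument uses the slack in the target: since $\log x\le x$, the extra $\log$ of the effective rank contributes at most terms of order $\trace/\sqrt{n}$ and $\trace^2/n$; the former is $\le\C\trace\sqrt{\t/n}$ because $\t>2.6$, and the latter is dominated by $\trace\sqrt{\t/n}$ exactly in the regime $\trace\sqrt{\t/n}\le1$, which is the only regime in which the conclusion has content anyway, as \ref{designass} requires $\delta<1$. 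With that repair (or with a truncation argument of the kind you mention), your proposal is sound.
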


\let\c =\oldc    

\end{document}